\def\1{\bm{1}}
\DeclareMathAlphabet{\mathsfit}{\encodingdefault}{\sfdefault}{m}{sl}
\SetMathAlphabet{\mathsfit}{bold}{\encodingdefault}{\sfdefault}{bx}{n}
\newcommand{\E}{\mathbb{E}}
\newcommand{\R}{\mathbb{R}}
\theoremstyle{plain}
\newtheorem{theorem}{Theorem}
\newtheorem{lemma}{Lemma}
\newtheorem{assumption}{Assumption}
\theoremstyle{definition}
\newtheorem{definition}{Definition}
\newtheorem{remark}{Remark}
\title{Low-Rank Orthogonalization for Large-Scale Matrix Optimi-\\zation with Applications to Foundation Model Training}
\author{\name Chuan He \email chuan.he@liu.se \\
      \addr Department of Mathematics\\
      Link\"oping University, Sweden
      \AND
      \name Zhanwang Deng\thanks{The first two authors contributed equally.} \email dzw\_opt2022@stu.pku.edu.cn \\
      \addr Academy for Advanced Interdisciplinary Studies\\
      Peking University, Beijing, People's Republic of China
      \AND
      \name Zhaosong Lu \email zhaosong@umn.edu\\
      \addr Department of Industrial and Systems Engineering\\
      University of Minnesota, USA}
\begin{document}

\maketitle

\begin{abstract}
Neural network (NN) training is inherently a large-scale matrix optimization problem, yet the matrix structure of NN parameters has long been overlooked. Recently, the optimizer Muon \citep{jordanmuon}, which explicitly exploits this structure, has gained significant attention for its strong performance in foundation model training. A key component contributing to Muon's success is matrix orthogonalization. In this paper, we propose \textit{low-rank orthogonalization}, which performs orthogonalization by leveraging the low-rank nature of gradients during NN training. Building on this, we introduce low-rank matrix-signed gradient descent (MSGD) and a low-rank variant of Muon. Numerical experiments demonstrate the superior performance of low-rank orthogonalization, with low-rank Muon achieving promising results in GPT-2 and LLaMA pretraining---surpassing the carefully tuned vanilla Muon on tasks with large model sizes. Theoretically, we establish the iteration complexity of low-rank MSGD for finding an approximate stationary solution, and the iteration complexity of low-rank Muon for finding an approximate stochastic stationary solution under heavy-tailed noise. The code to reproduce our numerical experiments is available at \url{https://github.com/dengzhanwang/Low-rank-Muon}.
\end{abstract}

\textbf{Keywords:} Orthogonalization, Muon, foundation model training, iteration complexity, heavy-tailed noise

\section{Introduction}

Training neural networks (NNs) \citep{lecun2015deep}, particularly recent foundation models, has consistently posed challenging large-scale optimization problems. Over the past decade, NN training has been dominated by vector-variate optimization methods---including SGD \citep{bottou2010large}, AdaGrad \citep{duchi2011adaptive}, RMSprop \citep{hinton2012neural}, Adadelta \citep{zeiler2012adadelta}, Adam \citep{kingma2015adam}, and AdamW \citep{LoshchilovH19}. Nonetheless, these methods disregard the inherent matrix structure of NN parameters---such as those in multi-layer perceptrons \citep{rosenblatt1958perceptron}, convolutional layers~\citep{lecun1989zip}, and the query, key, and value projections in attention mechanisms \citep{vaswani2017attention}.

Recently, a shift has occurred: optimization methods that exploit matrix structure are receiving increasing attention and have begun to demonstrate strong performance in foundation model training \citep{jordanmuon,pethick2025training}. These methods focus on solving the matrix optimization problem:
\begin{align}\label{ucpb}
\min_{X\in\R^{m\times n}} f(X).    
\end{align} 
In particular, Shampoo, as developed in \citep{anil2020scalable, gupta2018shampoo}, applies left and right preconditioning matrices and updates them as follows:
\begin{align}\label{two-side-shampoo}
X^{k+1} = X^k - \eta_k (L^k)^{-1/4}G^k (R^k)^{-1/4},   
\end{align}
where $\eta_k>0$ is the step size, $G^k\in\R^{m\times n}$ denotes the (stochastic) gradient of $f$ at $X^k$, and $L^k\in\R^{m\times m}$ and $R^k\in\R^{n\times n}$ are the left and right preconditioning matrices, respectively. Shampoo updates the left and right preconditioners $\{L^k\}$ and $\{R^k\}$ using the second-order statistics of the accumulated gradients, similarly to AdaGrad, and has shown comparable performance to popular vector-variate optimizers such as Adam and AdamW on foundation model training. Following Shampoo, other matrix-variate optimizers that use two-sided preconditioners, such as CASPR \citep{DuvvuriDVH24} and SOAP \citep{VyasMSBZJK25}, were also developed. A preconditioned Riemannian gradient descent method was developed in \citep{bian2024preconditioned} for low-rank matrix recovery, which adopts only the diagonal part of the Shampoo preconditioners. Moreover, one-sided preconditioned variants of Shampoo were developed in \citep{an2025asgo,XieWReddiKL25}.

In addition to Shampoo and its variants, another matrix-variate optimizer, Muon \citep{jordanmuon}, has attracted significant attention for outperforming standard optimizers such as Adam and AdamW in foundation model training \citep{stratos2025_practical_efficiency_muon}. At each iteration, Muon performs the update:
\begin{align}\label{muon-update}
M^k = (1-\theta_{k-1}) M^{k-1} + \theta_{k-1} G(X^k;\xi^k),\quad X^{k+1} = X^k - \eta_k \mathrm{msgn}(M^k),   
\end{align}    
where $G(\cdot;\xi)$ denotes the stochastic gradient of $f(\cdot)$, and $\mathrm{msgn}(M^k)=U^k(V^k)^T$ denotes the matrix sign of $M^k$, with $U^k$ and $V^k$ containing the left and right singular vectors of $M^k$, respectively. The matrix sign computation is often referred to as matrix orthogonalization, because calculating $\mathrm{msgn}(M)$ is equivalent to finding the (semi-)orthogonal matrix closest to $M$ with respect to the Frobenius norm (see, e.g., \citep[Proposition 4]{bernstein2024old}). Muon’s empirical success has sparked significant research interest, including efforts to understand its relationship with other algorithms, establish its convergence guarantees, and propose new variants (see, e.g., \citep{an2025asgo,cesista2025sdnr,chen2025muon,glentis2025minimalist,kovalev2025understanding,lau2025polargrad,li2025note,liu2025cosmos,ma2024swan,pethick2025training,riabinin2025gluon,sato2025analysis,sfyraki2025lions,shen2025convergence}). A popular interpretation of Muon is from the perspective of a linear minimization oracle with respect to the spectral norm (e.g., see \citep{bernstein2024old,cesista2025sdnr,glentis2025minimalist,kovalev2025understanding,lau2025polargrad,riabinin2025gluon}). That is, the matrix sign computation in \eqref{muon-update} can be recast as $-\mathrm{msgn}(M^k) = {\arg\min}_{\|\Delta\|\le 1} \{\langle M^k, \Delta\rangle\}$, where $\|\cdot\|$ denotes spectral norm. Based on this interpretation, algorithmic designs leveraging general matrix-induced norms $\|\cdot\|_{p\to q}$ have been discussed in \citep{bernstein2024old,cesista2025sdnr,glentis2025minimalist,riabinin2025gluon}. In addition, Muon is also connected to earlier algorithms and can be viewed as a special case of Shampoo, despite not explicitly using preconditioners. As discussed in \citep{jordanmuon}, by taking $L^k=G^k(G^k)^T$ and $R^k=(G^k)^TG^k$ in \eqref{two-side-shampoo}, the Shampoo updates reduce to the matrix-signed update: $X^{k+1}=X^k-\eta_k\mathrm{msgn}(G^k)$. Furthermore, convergence guarantees for Muon have been extensively studied (e.g., see \citep{an2025asgo,chen2025muon,kovalev2025understanding,li2025note,riabinin2025gluon,sato2025analysis,sfyraki2025lions,shen2025convergence}), and numerous new variants---such as SWAN \citep{ma2024swan}, Scion \citep{pethick2025training}, Gluon \citep{riabinin2025gluon}, PolarGrad \citep{lau2025polargrad}, Dion \citep{ahn2025dion2,ahn2025dion}, and AdaMuon \citep{si2025adamuon}---have been proposed.

Beyond applying orthogonalization, a key innovation of Muon is its use of a GPU-friendly method---Newton-Schulz iterations (typically with five steps)---to perform inexact matrix orthogonalization, making it well-suited for modern foundation model training. In fact, several earlier methods, including spectral gradient descent \citep{carlson2015stochastic,carlson2015stochastic1,carlson2015preconditioned} and orthogonalized gradient descent \citep{tuddenham2022orthogonalising}, have already adopted SVD-based orthogonalization for matrix optimization problems. However, since SVD is not computationally efficient on GPUs when training large-scale neural networks, these methods fail to scale to foundation model training.

Inspired by Muon and its orthogonalization subroutine, we aim to develop a faster and more lightweight orthogonalization method to further enhance Muon and its variants. Specifically, our design leverages the widely observed phenomenon that the gradient matrices of NN parameters are often low-rank (see, e.g., \citep{hao2024flora,hu2022lora,malladi2023fine,zhao2024GaLore}). To exploit this low-rank property, we propose performing low-rank orthogonalization by incorporating well-known low-rank matrix approximation techniques \citep{drineas2006fast,halko2011finding}. Our approach first constructs a low-rank projection of the gradient matrix using QR decomposition on a sketched matrix, and then performs orthogonalization on the projected matrix by leveraging its structure. Our proposed low-rank orthogonalization offers two main advantages over existing orthogonalization methods:
\begin{enumerate}[leftmargin=1.8em]
\item {\bf Computational Efficiency:} Traditional orthogonalization can be seen as computing the polar factor of a given full matrix. To exploit the low-rank property, our low-rank orthogonalization first computes the $Q$ factor of a smaller sketched matrix, and then computes the polar factor of a projected matrix constructed using the $Q$ factor. Since both the QR decomposition and polar decomposition are performed on much smaller matrices, our low-rank approach enjoys substantial computational savings for large-scale problems.

\item {\bf Noise Robustness:} In the presence of noise, singular vectors associated with small singular values often vary significantly, leading to instability when directly applying orthogonalization to the full matrix. To circumvent this instability, our low-rank orthogonalization method clips these vectors to eliminate unreliable estimates, thereby stabilizing the orthogonalization process and yielding a robust estimate of the matrix sign.
\end{enumerate}
These advantages will be illustrated in detail in Sections~\ref{subsec:muon-smso} and~\ref{sec:num}. Based on low-rank orthogonalization, we develop low-rank matrix-signed gradient descent (MSGD) and a low-rank variant of Muon. We also establish their complexity guarantees under mild assumptions.

Our main contributions are highlighted below.

\begin{itemize}[leftmargin=1.8em]
\item We propose {\it low-rank orthogonalization} that can be readily incorporated into matrix-variate optimization algorithms such as Muon. It can be efficiently executed on GPUs and serves as a lightweight substitute for traditional orthogonalization methods that are directly applied to the full matrix, such as Newton-Schulz iterations.

\item Under mild assumptions, we establish the iteration complexity of low-rank MSGD and a low-rank variant of Muon. To the best of our knowledge, our results provide the first complexity guarantees for a broad class of inexact Muon-type algorithms, including vanilla Muon, under heavy-tailed noise.
\end{itemize}

The remainder of this paper is organized as follows. In Section~\ref{sec:not}, we introduce the notation and assumptions used throughout the paper. In Section~\ref{sec:muon}, we propose low-rank orthogonalization and, based on it, develop low-rank MSGD and low-rank Muon. In Section~\ref{sec:num}, we present numerical results. Finally, we provide the proofs of the main results in Section~\ref{sec:pf}. %and concluding remarks in Sections~\ref{sec:pf} and~\ref{sec:cr}, respectively.

\section{Notation and Assumptions}\label{sec:not}
Throughout this paper, we use $\mathbb{R}^{m\times n}$ to denote the Euclidean space of $m\times n$ real matrices, and $\mathbb{Z}_+$ to denote the set of all nonnegative integers. We use $\|\cdot\|$ to denote the Euclidean norm of a vector or the spectral norm of a matrix; $\|\cdot\|_*$ and $\|\cdot\|_F$ to denote the nuclear norm and the Frobenius norm of a matrix, respectively; and $\langle\cdot,\cdot\rangle$ to denote the trace inner product for matrices. For any $M \in \mathbb{R}^{m\times n}$, we use $\mathrm{rank}(M)$ to denote its rank, and $[M]_k$ to denote its best rank-$k$ approximation with respect to $\|\cdot\|_F$. We define the matrix sign of any nonzero matrix $M \in \mathbb{R}^{m\times n}$ as $\mathrm{msgn}(M) = UV^T$, where $U \in \mathbb{R}^{m\times r}$ and $V \in \mathbb{R}^{n\times r}$ are column-orthogonal matrices obtained from the reduced SVD of $M$. We let $\varrho:=\min\{m,n\}$. In addition, we use $\widetilde{\mathcal{O}}(\cdot)$ to denote $\mathcal{O}(\cdot)$ with logarithmic factors omitted.

We now make the following assumption throughout this paper.

\begin{assumption}\label{asp:basic}
\begin{enumerate}[leftmargin=1.8em]
\item[{\rm (a)}] There exists a finite $f_{\mathrm{low}}$ such that $f(X)\ge f_{\mathrm{low}}$ for all $X\in\R^{m\times n}$.
\item[{\rm (b)}] There exists an $L_*>0$ such that $\|\nabla f(X) - \nabla f(Y)\|_*\le L_*\|X - Y\|$ for all $X,Y\in\R^{m\times n}$.
\end{enumerate}    
\end{assumption}

Assumption \ref{asp:basic}(a) is standard. Assumption \ref{asp:basic}(b) is natural in the analysis of Muon-type algorithms (e.g., see \citep{chen2025muon,riabinin2025gluon,shen2025convergence}). It follows from Assumption \ref{asp:basic}(b) that
\begin{align}\label{f-descent}
f(Y)  \le f(X) + \langle\nabla f(X), Y-X\rangle + \frac{L_*}{2}\|Y-X\|^2\qquad\forall X,Y\in\R^{m\times n}.  
\end{align}

We next provide a definition for approximate stationary points of problem \eqref{ucpb}.

\begin{definition}
For any $\epsilon\in(0,1)$, we say that $X\in\R^{m\times n}$ is an $\epsilon$-nuclear norm stationary point (NSP) of problem \eqref{ucpb} if it satisfies $\|\nabla f(X)\|_*\le\epsilon$, and that it is an $\epsilon$-stochastic nuclear norm stationary point (SNSP) of problem \eqref{ucpb} if it satisfies $\E[\|\nabla f(X)\|_*]\le\epsilon$.   
\end{definition}

\section{Matrix Optimization with Low-Rank Orthogonalization}\label{sec:muon}

In this section, we propose matrix optimization algorithms with low-rank orthogonalization for solving \eqref{ucpb}. In particular, we first propose low-rank orthogonalization in Section~\ref{subsec:muon-smso}. Then, we propose low-rank MSGD in Section~\ref{subsec:d-muon}, and a low-rank variant of Muon in Section~\ref{subsec:s-muon}.

\subsection{Low-Rank Orthogonalization}\label{subsec:muon-smso}

Orthogonalization techniques have attracted increasing attention in recent optimizer designs, as it has shown strong empirical performance in foundation model training (e.g., see \citep{bernstein2024old,jordanmuon,lau2025polargrad,tuddenham2022orthogonalising}). In this subsection, we develop a low-rank orthogonalization method, leveraging low-rank matrix approximation techniques, that serves as a lightweight substitute for the existing orthogonalization subroutines used in matrix-variate optimizers.

\begin{algorithm}
\caption{A Low-Rank Orthogonalization Method}
\label{alg:r-msgn-1}
\begin{algorithmic}[0]
\State \textbf{Input:} matrix ${M} \in \mathbb{R}^{m \times n}$, rank trial $r\in\mathbb{Z}_+\cap[1,\varrho]$.
\State \textbf{Output:} approximate matrix sign $M_O\in \mathbb{R}^{m \times n}$. 
\State Draw a Gaussian random matrix $G\in\R^{n\times r}$.
\State Perform a QR decomposition on $MG$ to obtain a column-orthogonal Q factor $Q\in\R^{m\times r}$.
\State Return $M_O=Q\mathrm{msgn}(Q^TM)$. (On GPUs, $\mathrm{msgn}(Q^TM)$ is recommended to be estimated via Newton-Schulz iterations.)
\end{algorithmic}
\end{algorithm}

Specifically, our low-rank orthogonalization method, presented in Algorithm~\ref{alg:r-msgn-1}, is based on Gaussian sketching \citep{halko2011finding}. This method first draws a Gaussian random matrix $G\in\R^{n\times r}$ with $r\ll\varrho$, and performs a QR decomposition on $MG$ to obtain a column-orthogonal Q factor $Q\in\R^{m\times r}$. Then, it computes $\mathrm{msgn}(Q^TM)\in\R^{r\times n}$ and returns $M_O=Q\mathrm{msgn}(Q^TM)$ as a low-rank approximation for $\mathrm{msgn}(M)$. As will be shown in Theorem~\ref{thm:msgn-qqm}, $M_O$ represents the matrix sign of $QQ^TM$, which is a low-rank approximation of $M$. Its proof is deferred to Section \ref{subsec:pf-srt}.

\begin{theorem}\label{thm:msgn-qqm}
Consider Algorithm \ref{alg:r-msgn-1} with inputs $M\in\R^{m\times n}$ and $r\in\mathbb{Z}_+\cap[1,\varrho]$, where $\varrho:=\min\{m,n\}$. Let $Q\in\R^{m\times r}$ be generated by Algorithm \ref{alg:r-msgn-1}. Then, for any $r_*$ satisfying $2\le r_* \le r-2$, it holds that
\begin{align}\label{upbd:exq-M-HHT-1}
\E[\|(I-QQ^T)M\|_F] \le \Big(1+\frac{r_*}{r-r_*-1}\Big)^{1/2} \|M-[M]_{r_*}\|_F.
\end{align}
Moreover, we have $\mathrm{msgn}(QQ^TM)=Q\mathrm{msgn}(Q^TM)$.
\end{theorem}

\begin{remark}
The relation \eqref{upbd:exq-M-HHT-1} is adapted from \citep[Theorem 10.5]{halko2011finding}, where additional guarantees---such as those involving different matrix norms and high-probability bounds---can also be found. In addition, low-rank matrix approximation based on column selection (e.g., see \citep{drineas2006fast,drineas2006fast2}) can be used to develop a low-rank orthogonalization method. However, since its approximation guarantee is more complicated than that of the Gaussian sketching-based approach, we defer the column-selection-based method to the supplementary materials.
\end{remark}

Next, we illustrate two major advantages of our low-rank orthogonalization method, namely, {\it computational efficiency} and {\it noise robustness}, through synthetic experiments on randomly generated matrices.

\paragraph{Computational Efficiency} We compare the computation time on GPUs for calculating inexact matrix sign of high-dimensional matrices using Newton-Schulz iterations, low-rank orthogonalization based on Gaussian sketching (Algorithm \ref{alg:r-msgn-1}) and column selection (see supplementary materials), and truncated SVD. 

For each $n\in\{1000,2000,5000,10000\}$, we generate 50 random matrices $M\in\R^{n\times n}$, with each entry drawn from the standard Gaussian distribution. We then apply all competing orthogonalization methods to estimate $\mathrm{msgn}(M)$. We implement Newton-Schulz iterations as provided in Muon \citep{jordanmuon} with 5 iterations. For our low-rank orthogonalization methods, we set $r=0.1 n$ as the input rank parameter, use command \texttt{torch.linalg.qr} to compute the Q factor, and apply 5 iterations of Newton-Schulz scheme following Muon \citep{jordanmuon} to compute the matrix sign of $Q^TM$. In addition, we use command \texttt{torch.pca\_lowrank} to efficient perform truncated SVD, setting the rank to $0.1n$ for truncation.
\begin{figure}[H]
\centering
        \includegraphics[width=0.49\linewidth]{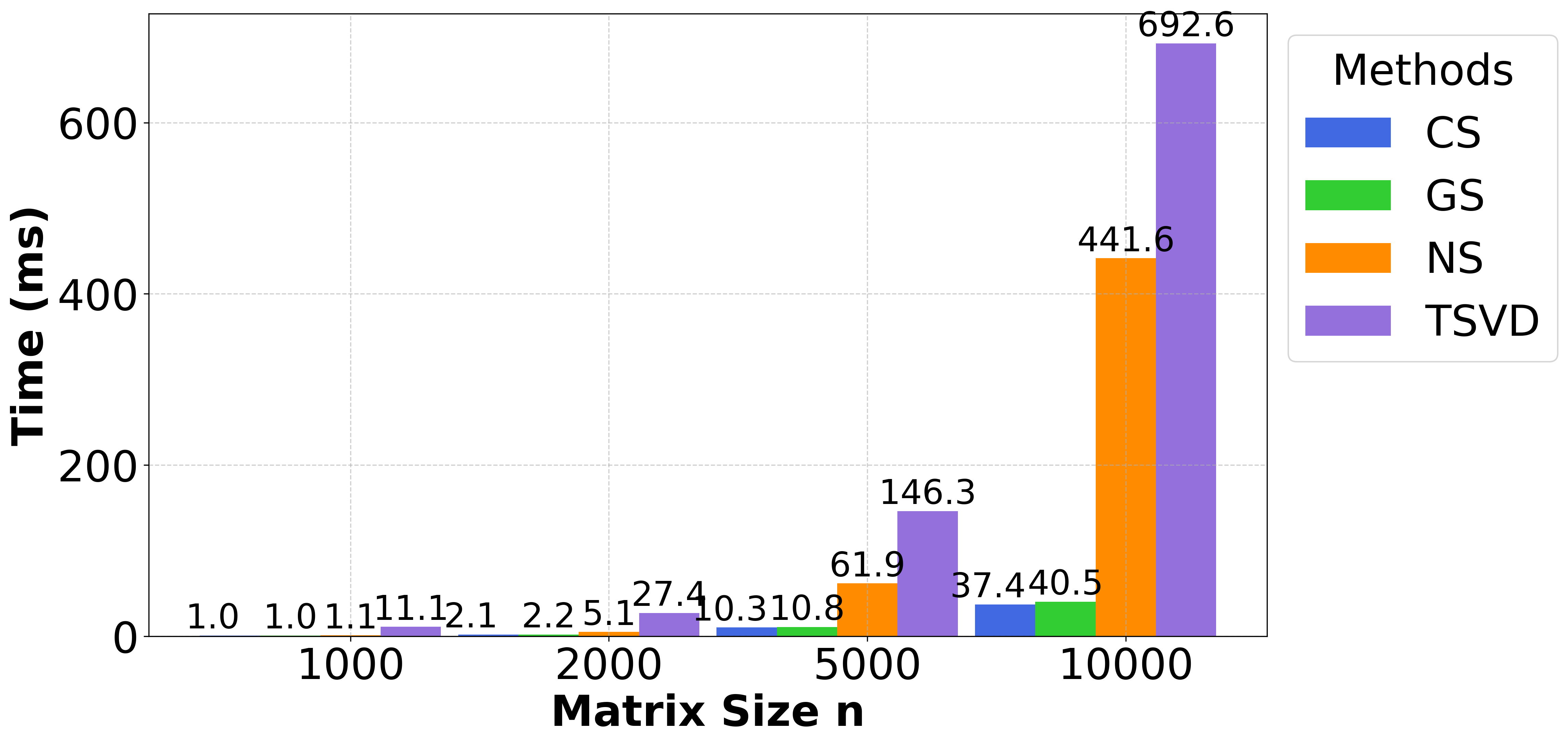}
        \includegraphics[width=0.49\linewidth]{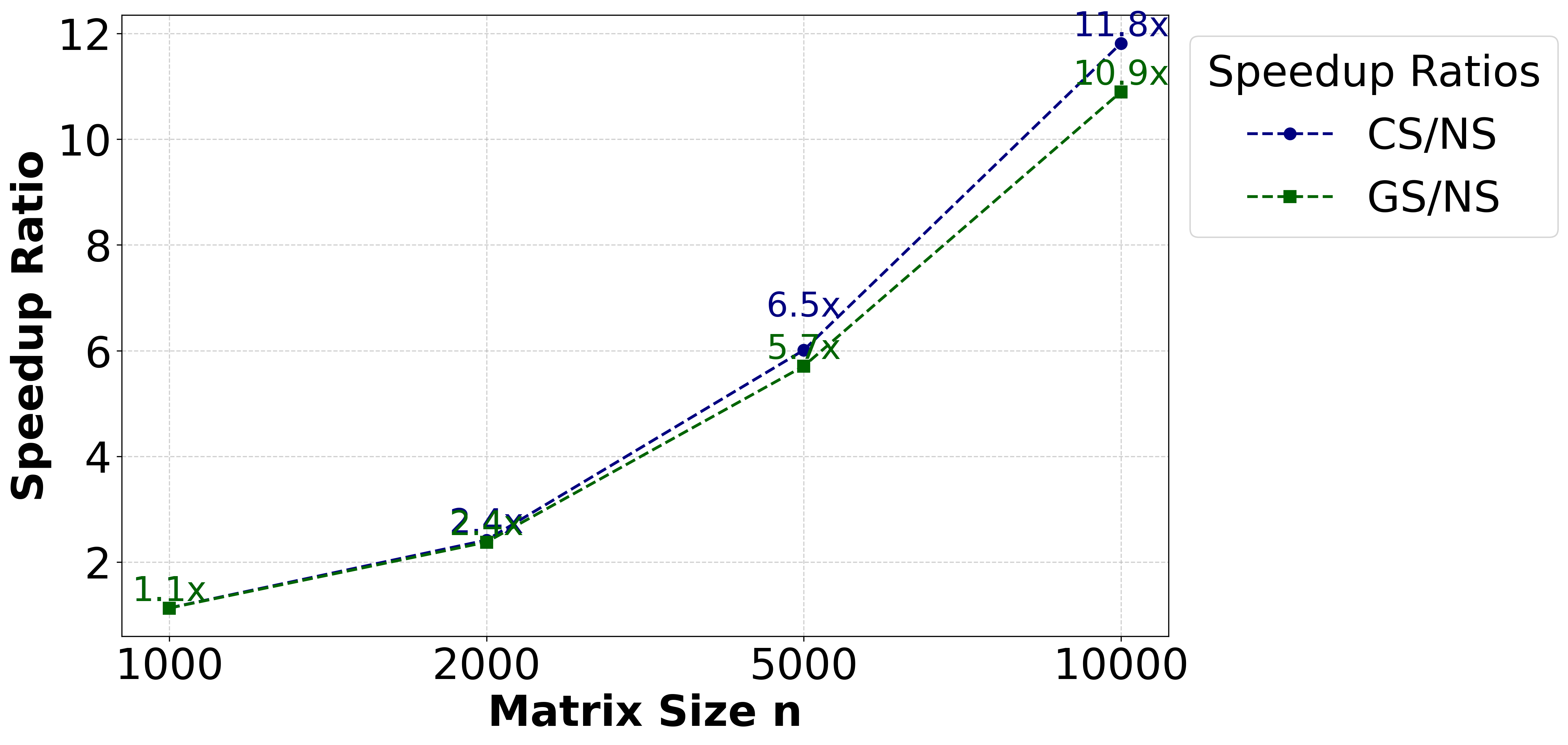}
\caption{Left: Comparison of GPU computation time across Newton-Schulz iterations (NS), our low-rank orthogonalization with Gaussian sketching (GS) and column selection (CS), and truncated SVD (TSVD).
Right: Speedup ratios of our low-rank orthogonalization methods compared to Newton-Schulz iterations.}
\label{fig:cpu-toy}
\end{figure}

\begin{figure}[H]
\centering
 \includegraphics[width=1.0\linewidth]{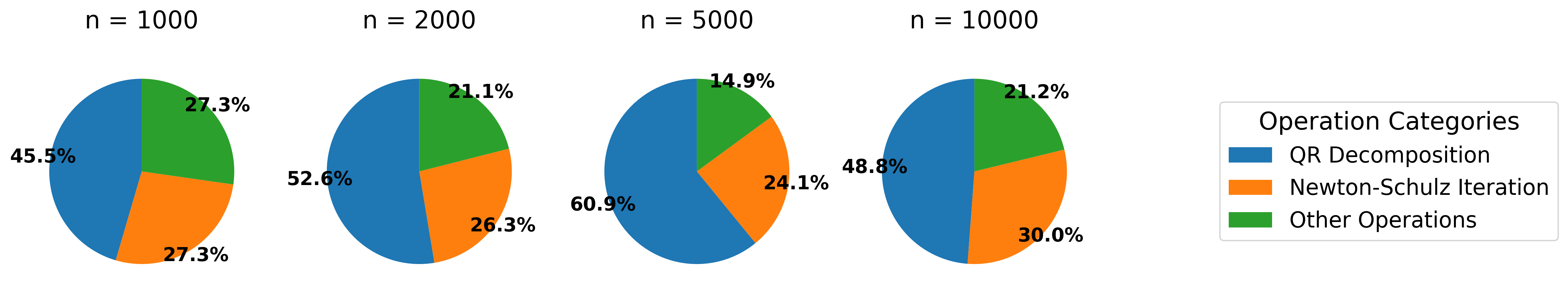}
\caption{Time distribution for low-rank orthogonalization with Gaussian sketching, including the QR decomposition, the Newton-Schulz iterations for computing the matrix sign, and other computations.}
\label{fig:time-distr}
\end{figure}

We present a comparison of the computation time on GPUs for all competing methods in Figure~\ref{fig:cpu-toy}, and the time distribution of our low-rank orthogonalization method with Gaussian sketching in Figure~\ref{fig:time-distr}. From Figure~\ref{fig:cpu-toy}, we observe that our low-rank orthogonalization method significantly reduces computation time compared to Newton-Schulz iterations and truncated SVD. Although both truncated SVD and our low-rank orthogonalization exploit low-rank structure for computation, truncated SVD is not well-suited for GPU environments and is therefore slower than the Newton-Schulz iterations. From Figure~\ref{fig:time-distr}, we observe that in our low-rank orthogonalization method with Gaussian sketching, the QR decomposition accounts for approximately half of the total time, while the Newton-Schulz iterations and other operations (such as generating Gaussian random matrices and performing matrix multiplications) each make up roughly half of the remaining time.

\paragraph{Noise Robustness} In addition to reducing computation time, our low-rank orthogonalization methods also produce more robust estimates of the matrix sign for low-rank matrices in the presence of noise. This robustness is particularly important in foundation model training, where gradients often exhibit low-rank structure. We now compare the performance of Newton–Schulz iterations and our low-rank methods in estimating the matrix sign of noisy, low-rank matrices.

For each $n\in\{1000,2000,5000,10000\}$, we first randomly generate 10 nearly low-rank matrices $M\in\R^{n\times n}$ following strategy: the top $0.1 n$ singular values are set to $1$, the remaining singular values are set to $10^{-4}$, and the singular vectors are randomly generated orthogonal vectors. For each $M$, we generate 50 noise matrices $N\in\R^{n\times n}$, with each entry drawn from a Gaussian distribution with mean zero and variance $\sigma^2\in\{0.1,1,10\}$, and construct noisy matrices $M_N=M+N$. We next apply Newton-Schulz iterations, and our low-rank orthogonalization method based on Gaussian sketching (Algorithm~\ref{alg:r-msgn-1}), using an input rank parameter $0.1n$, to estimate $\mathrm{msgn}(M_N)$. For each $M$, we compute the trace of the empirical covariance matrix of the estimates of $\mathrm{msgn}(M_N)$ produced by both Newton-Schulz iterations and Algorithm~\ref{alg:r-msgn-1}. Both methods are implemented in the same way as in the above synthetic experiments to illustrate the \textit{computational efficiency}.

\begin{figure}[H]
\centering
\includegraphics[width=1.0\linewidth]{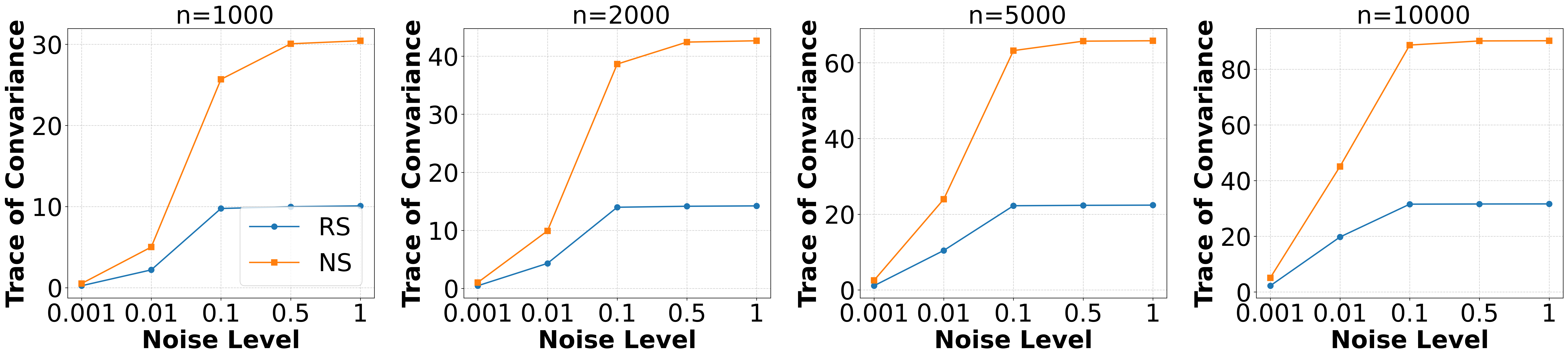}
\caption{Comparison of variance levels in matrix sign estimation across Newton-Schulz iterations (NS), and our low-rank orthogonalization methods using Gaussian sketching (GS), applied to nearly low-rank matrices in the presence of noise.}
\label{fig:var-level}
\end{figure}

Figure~\ref{fig:var-level} presents a comparison of the average trace of the covariance matrices across all competing methods for matrix sign estimation. From this figure, we observe that, compared to applying the Newton-Schulz iterations to the full matrix, our low-rank orthogonalization method yields matrix sign estimates that are significantly less sensitive to noise. This is because singular vectors associated with small singular values are more sensitive to noise than others, so removing them improves the robustness of the matrix sign computation. As a result, our low-rank method provides more stable and robust estimates of the matrix sign function, particularly for nearly low-rank matrices.

\subsection{Low-Rank Matrix-Signed Gradient Descent}\label{subsec:d-muon}
In this subsection, we propose low-rank MSGD methods, including a fixed-rank variant and a safeguarded variant with adaptive ranks. Both variants use low-rank orthogonalization as a subroutine.

\begin{algorithm}[!htbp]
\caption{Low-Rank MSGD}
\label{alg:dq-f}
\begin{algorithmic}[0]
\State \textbf{Input:} starting point $X^0\in\R^{m\times n}$, rank parameter $r\in\mathbb{Z}_+\cap[1,\varrho]$, step sizes $\{\eta_k\}\subset(0,\infty)$.
\For{$k=0,1,2,\ldots$}
\State Call Algorithm \ref{alg:r-msgn-1} (see Section \ref{subsec:muon-smso}) with $(M,r)=(\nabla f(X^k), r)$ to obtain an approximate matrix sign $M^k_O$, such that $M^k_O=\mathrm{msgn}(M^k_Q)$, where $M^k_Q$ is a low-rank approximation for $\nabla f(X^k)$.
\State Update the next iterate: $X^{k+1} = X^k - \eta_k M^k_O$.
\EndFor
\end{algorithmic}
\end{algorithm}

At each iteration $k\ge0$, the fixed-rank variant invokes Algorithm~\ref{alg:r-msgn-1} to compute $M^k_O=\mathrm{msgn}(M_Q^k)$, where $M_Q^k$ is a low-rank approximation of $\nabla f(X^k)$. Then, the next iterate $X^{k+1}$ is obtained by performing a line search update from $X^k$ along the matrix-signed direction $-M^k_O$. %with a suitable step size. 
Details of this method are presented in Algorithm \ref{alg:dq-f}.

The following theorem provides a convergence guarantee for Algorithm \ref{alg:dq-f}, whose proof is deferred to Section \ref{subsec:pf-dmuon}.

\begin{theorem}\label{thm:dm-f-conv}
Suppose that Assumption \ref{asp:basic} holds. Let $\{(X^k,M_O^k)\}$ be the sequence generated by Algorithm~\ref{alg:dq-f} with $M_O^k=\mathrm{msgn}(M_Q^k)$ for all $k\ge0$ and step sizes $\{\eta_k\}$ given by $\eta_k = (k+1)^{-1/2}$ for all $k\ge0$. Then, it holds that for all $K\ge3$,
\begin{align}
&\min_{0\le k\le K-1}\{\|\nabla f(X^k)\|\} \le \frac{f(X^0) - f_{\mathrm{low}} + L_*\ln K}{K^{1/2}}+ \frac{2}{K^{1/2}}\sum_{k=0}^{K-1}\frac{\|\nabla f(X^k) - M_Q^k\|_*}{(k+1)^{1/2}},\label{ineq:conv-gurara-1}
\end{align}
where $L_*$ is defined in Assumption \ref{asp:basic}.
\end{theorem}

\begin{remark}\label{rmk:aft-alg1}
From Theorem \ref{thm:dm-f-conv}, we observe that when $\{(X^k,M_Q^k)\}$ satisfies $\sum_{k=0}^{K-1}\|\nabla f(X^k) - M_Q^k\|_*(k+1)^{-1/2}=\widetilde{\mathcal{O}}(1)$, it holds that $\min_{0\le k\le K-1}\{\|\nabla f(X^k)\|_*\}=\widetilde{\mathcal{O}}(K^{-1/2})$, which matches, up to a logarithmic factor, the well-established optimal convergence rate for nonconvex optimization (e.g., see \citep{carmon2020lower}). Moreover, if the gradients $\{\nabla f(X^k)\}$ have low rank when $k$ is sufficiently large (as is often the case during deep neural network training \citep{zhao2024GaLore}), we can tune the rank parameter $r$ to be close to the effective rank of the gradients, causing the sequence $\{\|\nabla f(X^k) - M_Q^k\|_*\}$ to remain close to zero.
\end{remark}

We next describe a safeguarded low-rank MSGD method with adaptively updated ranks. At each iteration $k\ge0$, this method invokes Algorithm~\ref{alg:r-msgn-1} with $(M,r)=(\nabla f(X^k),r_k)$ to obtain $M^k_O=\mathrm{msgn}(M_Q^k)$, where $M_Q^k$ is a low-rank approximation of $\nabla f(X^k)$ such that the approximation error satisfies \eqref{upbd-apx-error}. Then, the next iterate $X^{k+1}$ is obtained by performing a line search update from $X^k$ along the matrix-signed direction $-M^k_O$, with a suitable step size. Details of this method are provided in Algorithm \ref{alg:dp}.

It is noteworthy that the approximation error in \eqref{upbd-apx-error} holds for some $r_k\le\varrho$ because when $r_k=\varrho$, the error $\|M^k_Q-\nabla f(X^k)\|_*$ is zero. When the matrix is low-rank or has a small effective rank, $r_k$ can be chosen to be much smaller than $\varrho$. In practice, one can gradually increase the trial ranks to find $r_k$ such that \eqref{upbd-apx-error} holds.

\begin{algorithm}[!htbp]
\caption{Safeguarded Low-Rank MSGD}
\label{alg:dp}
\begin{algorithmic}[0]
\State \textbf{Input:} starting point $X^0\in\R^{m\times n}$, initial rank trial $r_0\in\mathbb{Z}_+\cap[1,\varrho]$, step sizes $\{\eta_k\}\subset(0,\infty)$, control errors $\{\delta_k\}\subset(0,\infty)$
\For{$k=0,1,2,\ldots$}
\State Call Algorithm \ref{alg:r-msgn-1} (see Section \ref{subsec:muon-smso}) with $(M,r)=(\nabla f(X^k),r_k)$ to obtain an approximate matrix sign $M^k_O$ such that $M_O^k=\mathrm{msgn}(M_Q^k)$ and 
\begin{align}\label{upbd-apx-error}
\|\nabla f(X^k) - M_Q^k\|_* \le \delta_k.    
\end{align}
\State Update the next iterate: $X^{k+1} = X^k - \eta_k M^k_O$.
\EndFor
\end{algorithmic}
\end{algorithm}

The following theorem establishes an iteration complexity of Algorithm \ref{alg:dp}, whose proof is deferred to Section \ref{subsec:pf-dmuon}.

\begin{theorem}\label{thm:d-muon}
Suppose that Assumption~\ref{asp:basic} holds. Let $f_{\mathrm{low}}$ and $L_*$ be given in Assumption \ref{asp:basic}, and define
\begin{align}\label{def:Ud}
U_{\mathrm{gd}}:= f(X^0) - f_{\mathrm{low}} + L_*+4.   
\end{align}  
Let $\{X^k\}$ be generated by Algorithm \ref{alg:dp} with inputs $\{(\eta_k,\delta_k)\}$  given by $\eta_k = \delta_k = {(k+1)^{-1/2}}$ for all $k\ge0$. Then, for any $\epsilon\in(0,1)$, it holds that $\min_{0\le k\le K-1}\{\|\nabla f(X^k)\|_*\} \le\epsilon$ for all $K$ satisfying
\begin{align*}
K\ge\max\Big\{\Big(\frac{4U_{\mathrm{gd}}}{\epsilon}\ln\Big(\frac{4U_{\mathrm{gd}}}{\epsilon}\Big)\Big)^{2},3\Big\}.
\end{align*}
\end{theorem}

\begin{remark}
From Theorem \ref{thm:d-muon}, we observe that Algorithm \ref{alg:dp} achieves an iteration complexity of $\widetilde{\mathcal{O}}(\epsilon^{-2})$ for finding an $\epsilon$-NSP of \eqref{ucpb}. This complexity bound matches, up to a polylogarithmic factor, the lower complexity bound as established in \citep{carmon2020lower}. 
\end{remark}

\subsection{Low-Rank Muon}\label{subsec:s-muon}

In this subsection, we propose a low-rank variant of Muon \citep{jordanmuon}, and analyze its iteration complexity under heavy-tailed noise.

This method follows a framework similar to Muon \citep{jordanmuon}, but instead of directly computing the matrix sign of the momentum update, it computes only its low-rank approximation. Specifically, at each iteration of this method,  it first performs a momentum update to generate $M^k$ by aggregating stochastic gradients of $f$ evaluated at $X^0,\ldots,X^k$. Then, Algorithm \ref{alg:r-msgn-1} is invoked to obtain $M^k_O=\mathrm{msgn}(M_Q^k)$, where $M_Q^k$ is a low-rank approximation of $\nabla f(X^k)$ such that the approximation error satisfies \eqref{upbd-apx-error-1}. The next iterate $X^{k+1}$ is obtained by performing a line search update from $X^k$ along the matrix-signed direction $-M_O^k$ with a suitable step size. Details of this method are given in Algorithm~\ref{alg:sp}.

Before analyzing the complexity of Algorithm \ref{alg:sp} for computing an approximate solution to \eqref{ucpb}, we make the following heavy-tailed noise assumption regarding the stochastic gradient $G(\cdot;\xi)$.
\begin{assumption}\label{asp:ht}
The stochastic gradient estimator $G:\R^{m\times n}\times\Xi\to\R^{m\times n}$ satisfies
    \begin{align*}
    \E[G(X;\xi)]=\nabla f(X),\quad \E[\|G(X;\xi)-\nabla f(X)\|_F^\alpha] \le \sigma^\alpha    
    \end{align*}
    for some $\sigma>0$ and $\alpha\in(1,2]$.    
\end{assumption}

\begin{algorithm}[h]
\caption{Low-Rank Muon}
\label{alg:sp}
\begin{algorithmic}[0]
\State \textbf{Input:} starting point $X^0\in\R^{m\times n}$, initial rank trial $r_0\in\mathbb{Z}_+\cap[1,\varrho]$, step sizes $\{\eta_k\}\subset(0,\infty)$, weighting parameters $\{\theta_k\}\subset(0,1]$, control errors $\{\delta_k\}\subset(0,\infty)$. 
\State \textbf{Initialize:} $M^{-1}=\mathbf{0}_{m \times n}$ and $\theta_{-1}=1$.
\For{$k=0,1,2,\ldots$}
\State Compute the full-rank search direction:
\begin{align}\label{update-mk}
M^k = (1 - \theta_{k-1}) M^{k-1} + \theta_{k-1} G(X^k;\xi^{k}).
\end{align}
\State Call Algorithm \ref{alg:r-msgn-1} (see Section \ref{subsec:muon-smso}) with $(M,r)=(M^k,r_k)$ to obtain an approximate matrix sign $M^k_O$ such that $M_O^k=\mathrm{msgn}(M^k_Q)$ and 
\begin{align}\label{upbd-apx-error-1}
\|M^k - M_Q^k\|_* \le \delta_k.    
\end{align}
\State Update the next iterate:
\begin{align}\label{update-xk}
X^{k+1} = X^k - \eta_kM_O^k.
\end{align}
\EndFor
\end{algorithmic}
\end{algorithm}

The following theorem establishes the iteration complexity of Algorithm \ref{alg:sp}, whose proof is deferred to Section \ref{subsec:pf-smuon}. 
\begin{theorem}\label{thm:s-muon-k}
Suppose that Assumptions~\ref{asp:basic} and \ref{asp:ht} hold. Let $f_{\mathrm{low}}$ and $L_*$ be given in Assumption \ref{asp:basic}, and $\sigma$ and $\alpha$ be given in Assumption \ref{asp:ht}. Define 
\begin{align}\label{def:Umn}
U_{\mathrm{mn}}&:= f(X^0) - f_{\mathrm{low}} + \sigma^\alpha + 2L_* + 4 + 2(\alpha-1)(2\varrho^{1/2}/\alpha)^{\alpha/(\alpha-1)} + 6L_*^\alpha + 4\sigma^\alpha.
\end{align}
Let $\{X^k\}$ be generated by Algorithm \ref{alg:sp} with input parameters $\{(\eta_k,\theta_k,\delta_k)\}$ given by $\eta_k=(k+1)^{-(2\alpha-1)/(3\alpha-2)}$, $\theta_k=(k+1)^{-\alpha/(3\alpha-2)}$, and $\delta_k=(k+1)^{-(\alpha-1)/(3\alpha-2)}$ for all $k\ge0$. Then, for any $\epsilon\in(0,1)$, it holds that $\E[\|\nabla f(X^{\iota_K})\|_*]\le\epsilon$ for all $K$ satisfying
\begin{align*}
K\ge \max\Big\{\Big(\frac{2(3\alpha-2)U_{\mathrm{mn}}}{(\alpha-1)\epsilon}\ln\Big(\frac{2(3\alpha-2)U_{\mathrm{mn}}}{(\alpha-1)\epsilon}\Big)\Big)^{\frac{3\alpha-2}{\alpha-1}},3\Big\},     
\end{align*}
where $\iota_K$ is uniformly drawn from $\{0,\ldots,K-1\}$.
\end{theorem}

\begin{remark}
From Theorem \ref{thm:s-muon-k}, one can observe that Algorithm \ref{alg:sp} achieves an iteration complexity of $\widetilde{\mathcal{O}}(\epsilon^{-(3\alpha-2)/(\alpha-1)})$ for finding an $\epsilon$-SNSP of problem \eqref{ucpb}, which matches the optimal dependence on $\epsilon$ in the complexity results for vector-variate stochastic first-order methods under heavy-tailed noise (see, e.g., \citep{he2025complexity,zhang2020adaptive}). To the best of our knowledge, our result is the first to show that the inexact Muon-type algorithms can achieve an iteration complexity with optimal dependence on $\epsilon$ when applied to matrix-variate optimization under heavy-tailed noise.
\end{remark}

\section{Numerical Experiments}\label{sec:num}

In this section, we present numerical experiments to evaluate the performance of low-rank Muon (Algorithm~\ref{alg:sp}) and compare our method with vanilla Muon, AdamW, and SGD. The experiments are conducted on GPT-2 pretraining (Section~\ref{subsec:num-gp}) and LLaMA pretraining (Section~\ref{subsec:num-lp}). All experiments are executed on a server with two NVIDIA A100 GPUs (80 GB). The code to implement the proposed algorithm on these numerical examples is available at \url{https://github.com/dengzhanwang/Low-rank-Muon}.

\subsection{GPT-2 Pretraining}\label{subsec:num-gp}

In this subsection, we consider pretraining GPT-2 \citep{radford2019language}, a transformer-based language model. We experiment with GPT-2 models of sizes 60M, 135M, 350M, and 1B on the same three datasets tested in the Muon GitHub repository \citep{jordanmuon}: FineWeb10B, FineWeb100B, and FineWebEdu10B.

\begin{table*}[htbp]
\centering
\caption{Comparison of validation perplexity and computational time for all competing methods on GPT-2 pretraining.}
\label{tab:comparison-gpt2}
\resizebox{1.0\textwidth}{!}{\begin{tabular}{c|c|c|c|c|c|c|c|c|c}
\hline
\multirow{2}{*}{Dataset} & \multirow{2}{*}{Method} & \multicolumn{4}{c|}{Validation Perplexity} & \multicolumn{4}{c}{Computational Time} \\
\cline{3-10}
 & & 60M & 135M & 350M & 1B & 60M & 135M & 350M & 1B \\
\hline

\multirow{7}{*}{FineWeb10B} 
 & SGDM & 48.85 & 48.85 & 31.54 & 27.86 & 2.51e3 & 7.08e3 & 1.20e4 & 2.70e5 \\
 & AdamW & 43.56 & 43.56 & 25.72 & 21.58 & 2.52e3 & 7.08e3 & 1.21e4 & 2.69e5 \\
 & GaLore & 57.61 & 39.59 & 30.87 & 28.58 & 2.73e3 & 7.12e3 & 1.24e4 & 2.76e5 \\
  & SOAP & 41.32 & 55.37 & 38.45 & 34.58 & 2.56e3 & 7.09e3 & 1.23e4 & 2.73e5 \\
 & Muon & \textbf{32.89} & 32.99 & 28.48 & 20.99 & 2.76e3 & 7.57e3 & 1.32e4 & 3.04e5 \\
 & LR-Muon100  & 35.21 & 28.83 & 27.32 & 21.70 & 2.69e3 & 7.49e3 & 1.25e4 & 2.90e5 \\
 & LR-Muon200 & 33.98 & \textbf{27.34} & \textbf{25.64} & \textbf{20.69} & 2.79e3 & 7.54e3 & 1.26e4 & 2.91e5 \\
\hline
\hline
\multirow{7}{*}{FineWeb100B} 
 & SGDM & 51.48 & 51.48 & 77.50 & 29.75 & 2.69e3 & 7.25e3 & 1.04e4 & 2.52e5 \\
 & AdamW & 43.52 & 43.51 & 27.47 & 23.53 & 2.70e3 & 7.21e3 & 1.05e4 & 2.56e5 \\
 & GaLore & 56.42 & 58.09 & 31.19 & 26.53 & 2.81e3 & 7.31e3 & 1.15e4 & 2.67e5 \\
 & SOAP & 41.32 & 44.32 & 30.98 & 25.56 & 2.71e3 & 7.23e3 & 1.07e4 & 2.58e5 \\
 & Muon & 34.04 & 33.02 & 32.76 & 20.75 & 2.79e3 & 7.70e3 & 1.19e4 & 3.09e5 \\
 & LR-Muon100 & 35.78 & 30.19 & 27.59 & 21.76 & 2.77e3 & 7.62e3 & 1.09e4 & 2.87e5 \\
 & LR-Muon200 & \textbf{34.02} & \textbf{28.31} & \textbf{25.86} & \textbf{20.45} & 2.80e3  & 7.69e3 & 1.10e4 & 2.91e5 \\
\hline
\hline
\multirow{5}{*}{FineWebEdu10B} 
 & SGDM & 50.92 & 77.50 & {31.54} & 25.49 & 2.56e3 & 7.09e3 & 1.20e4 & 2.53e5 \\
 & AdamW & 43.56 & 26.45 & {21.01} & 22.45 & 2.59e3 & 7.09e3 & 1.20e4 & 2.56e5 \\
 & GaLore & 69.49 & 39.59 & {57.42} & 37.45 & 2.64e3 & 7.12e3 & 1.24e4 & 2.60e5 \\
 & SOAP & 55.37 & 41.77 & {47.90} & 39.20 & 2.60e3 & 7.11e3 & 1.22e4 & 2.58e5 \\
 & Muon & \textbf{32.88} & 23.89 & {22.23} & 19.54 & 2.80e3 & 7.63e3 & 1.33e4 & 3.05e5 \\
 & LR-Muon100 & 35.60 & 23.50 & {22.39} & 19.97  & 2.74e3 & 7.53e3 & 1.26e4 &2.85e5 \\
 & LR-Muon200 & 33.93 & \textbf{22.37} & \textbf{20.96} & \textbf{18.85} & 2.83e3  & 7.59e3 & 1.29e4 & 2.87e5 \\
\hline
\end{tabular}}
\end{table*}

We apply low-rank Muon, Muon, SOAP, GaLore, AdamW, and SGD with momentum (SGDM) for GPT-2 pretraining. We implement two low-rank Muon variants with rank parameters 100 and 200, abbreviated as LR-Muon100 and LR-Muon200, respectively. Following the experiments in \citep{modded_nanogpt_2024,lau2025polargrad}, we use AdamW to train the embedding and head layers for both Muon and low-rank Muon. We initialize all methods with the same weights from pretrained GPT-2 models, and terminate all methods after one training epoch, consisting of 5000 iterations. We compare all methods using validation perplexity, a standard metric in foundation model training (e.g., \citep{radford2019language,touvron2023llama,vaswani2017attention}), which is defined as the exponential of the validation loss and serves to amplify performance differences. The algorithmic parameters are carefully tuned to suit each method well in terms of computational performance. More details about the experimental setups, including algorithm and GPT-2 model parameters, are provided in supplementary materials.
\begin{figure}[H]
    \centering
\includegraphics[width=0.75\linewidth]{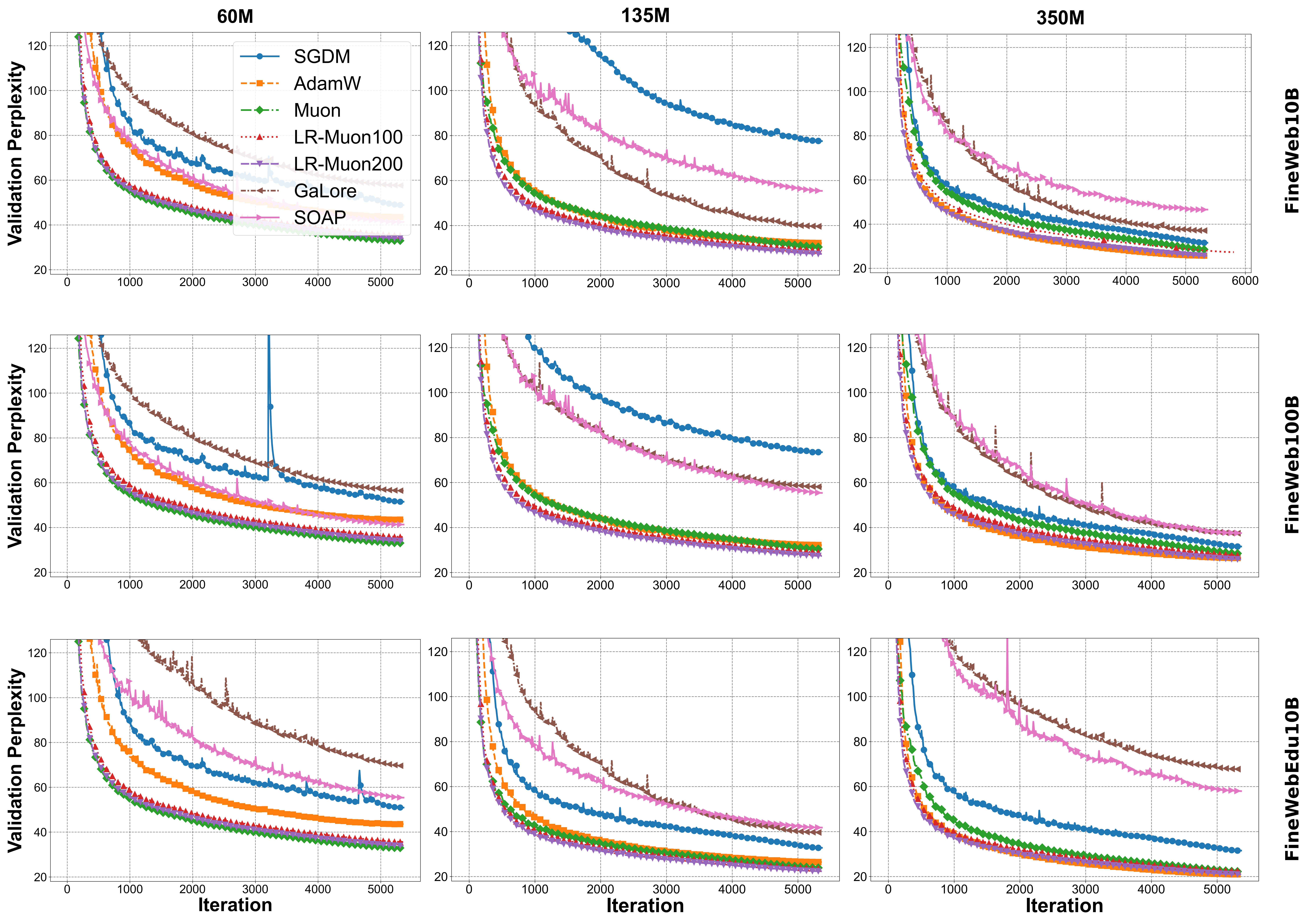}
    \caption{Comparison of validation perplexity versus computational time for all competing methods in training GPT-2 models.}
    \label{fig:gpt2_val_perplexity}
\end{figure}

% \begin{figure}[htbp]
%     \centering
% \includegraphics[width=0.3\linewidth]{images/GPT2/gpt_10B_60M.png}
% \includegraphics[width=0.3\linewidth]{images/GPT2/gpt_10B_135M.png}
% \includegraphics[width=0.3\linewidth]{images/GPT2/gpt_10B_350M.png}\\
% \includegraphics[width=0.3\linewidth]{images/GPT2/gpt_100B_60M.png}
% \includegraphics[width=0.3\linewidth]{images/GPT2/gpt_100B_135M.png}
% \includegraphics[width=0.3\linewidth]{images/GPT2/gpt_100B_350M.png}\\
% \includegraphics[width=0.3\linewidth]{images/GPT2/gpt_edu_60M.png}
% \includegraphics[width=0.3\linewidth]{images/GPT2/gpt_edu_135M.png}
% \includegraphics[width=0.3\linewidth]{images/GPT2/gpt_edu_350M.png}\\
%     \caption{Comparison of validation perplexity versus computational time for all competing methods in training GPT-2 models.}
%     \label{fig:gpt2_val_perplexity}
% \end{figure}

%We visualize the top 100 singular values of the momentum updates $\{M^k\}$ associated with the Q, K, and V matrices for a 350M GPT-2 model trained by Muon on the FineWeb100B dataset in Figure~\ref{fig:lr1-gpt2}. The first and second rows of the figure display the top 100 singular values across different training iterations and neural network layers, respectively. This visualization partly supports the low-rank nature of the momentum updates.

We present a comparison of validation perplexity and computational time in Table~\ref{tab:comparison-gpt2}. From this table, we observe that for GPT-2 with a model size of 60M, our low-rank Muon achieves slightly worse validation perplexity than full-rank Muon, but performs better than AdamW, SGDM, SOAP, and GaLore. For GPT-2 models with larger sizes, our low-rank Muon achieves better validation perplexity than all other competing methods. We also observe that the computational time of our low-rank Muon improves upon that of vanilla Muon, although all Muon-type methods remain slower than SGDM and AdamW. These observations show that our low-rank orthogonalization enables obtaining more generalizable GPT-2 models when the model size is large, but may be less effective for smaller models. In addition, our low-rank orthogonalization saves computational time compared to orthogonalization by Newton-Schulz iterations, but the gain is not as significant as those presented in Section~\ref{subsec:muon-smso}, since the forward and backward passes of neural networks account for the majority of the computational time.

We plot the convergence behavior of validation perplexity versus training iterations in Figure~\ref{fig:gpt2_val_perplexity}. From this figure, we observe that for each model size, the competing methods exhibit similar performance patterns across the three datasets. For the model size of 60M, all Muon-type methods consistently outperform AdamW and SGDM, while our low-rank Muon performs slightly worse than vanilla Muon. For the 135M model, our low-rank Muon outperforms vanilla Muon and other training methods. Vanilla Muon performs similarly to AdamW, and all methods show a large improvement compared to SGDM. For the model size of 350M, our low-rank Muon has slightly better performance than AdamW, while outperforming Muon and SGDM by a larger margin. These observations support that our low-rank Muon improves upon vanilla Muon in GPT-2 training and is more effective for models with larger sizes.

%With all the above observations and discussions, we conclude that our low-rank Muon is a viable alternative to existing training methods and is particularly more effective than other methods for training GPT-2 models with larger sizes.

\subsection{LLaMA Pretraining}\label{subsec:num-lp}

In this subsection, we consider pretraining LLaMA \citep{touvron2023llama}, a transformer-based language model with a more refined architecture than GPT-2. We experiment with LLaMA models of sizes 60M, 135M, 350M, and 1B on the same three datasets tested in the Muon GitHub repository \citep{jordanmuon}: FineWeb10B, FineWeb100B, and FineWebEdu10B. 

%\begin{figure}[htbp]
%    \centering
%    \includegraphics[width=0.9\linewidth]{images/LlamA/QKVrank_layer.png}\\
%    \includegraphics[width=0.9\linewidth]{images/LlamA/QKVrank_step.png}
%    \caption{Singular value distributions of the Q, K, and V matrices: across layers at iteration 300 (top row) and over training iterations at layer 16 (bottom row) during LLaMA pretraining.}
%    \label{fig:lr1-llama}
%\end{figure}

\begin{table*}[htbp]
\centering
\caption{Comparison of validation perplexity and computational time for all competing methods on LLaMA pretraining.}
\label{tab:comparison-llama}
\resizebox{1.0\textwidth}{!}{\begin{tabular}{l|c|c|c|c|c|c|c|c|c}
\hline
\multirow{2}{*}{Dataset} & \multirow{2}{*}{Method} & \multicolumn{4}{c|}{Validation Perplexity} & \multicolumn{4}{c}{Computational Time} \\
\cline{3-10}
 & & 60M & 135M & 350M & 1B & 60M & 135M & 350M & 1B \\
\hline

\multirow{7}{*}{FineWeb10B} 
 & SGDM &  94.90 & 86.99 & 86.49 & 44.71  & $2.49 \text{e3}$ & $5.27 \text{e3}$ & $1.20 \text{e4}$ & $2.77 \text{e5}$ \\
 & AdamW & 50.51 & 40.17 & 35.76 & 24.14  & $2.46 \text{e3}$ & $5.27 \text{e3}$ & $1.21 \text{e4}$ & $2.78 \text{e5}$ \\
  & GaLore &  94.90 & 86.99 & 86.49 & 44.71  & $2.49 \text{e3}$ & $5.27 \text{e3}$ & $1.20 \text{e4}$ & $2.77 \text{e5}$ \\
 & SOAP & 50.51 & 40.17 & 35.76 & 24.14  & $2.46 \text{e3}$ & $5.27 \text{e3}$ & $1.21 \text{e4}$ & $2.78 \text{e5}$ \\
 & Muon & 36.68 & 37.96 & 36.43 & {20.99} & $2.59 \text{e3}$ & $5.69 \text{e3}$ & $1.35 \text{e4}$ & $3.18 \text{e5}$ \\
 & LR-Muon100  & 37.33 & \textbf{34.08} & 33.54 & {21.70} & $2.69 \text{e3}$ & $5.66 \text{e3}$ & $1.28 \text{e4}$ & $2.79 \text{e5}$ \\
 & LR-Muon200 & \textbf{36.05} & 34.37 & \textbf{32.37} & \textbf{20.69} & $2.70 \text{e3}$ & $5.70 \text{e3}$ & $1.30 \text{e4}$ & $2.91 \text{e5}$  \\
\hline
\hline
\multirow{7}{*}{FineWeb100B} 
 & SGDM & 81.59 & 77.50 & 87.49 & 37.56 & $2.60 \text{e3}$ & $5.44 \text{e3}$ & $1.05 \text{e4}$ & $2.65 \text{e5}$ \\
 & AdamW & 41.49 & 40.54 & 35.47 & 23.89 & $2.61\text{e3}$ & $5.47 \text{e3}$ & $1.06 \text{e4}$ & $2.69 \text{e5}$ \\
  & GaLore & 58.97 & 45.14 & 37.05 & 35.56 & $2.65 \text{e3}$ & $5.50 \text{e3}$ & $1.09 \text{e4}$ & $2.69 \text{e5}$ \\
 & SOAP & 55.08 & 52.28 & 47.52 & 43.89 & $2.64 \text{e3}$ & $5.50 \text{e3}$ & $1.07 \text{e4}$ & $2.70 \text{e5}$ \\
 & Muon & 38.49 & 37.66 & 36.66 & 21.76 & $2.58 \text{e3}$ & $5.63 \text{e3}$ & $1.07 \text{e4}$ & $3.16 \text{e5}$ \\
 & LR-Muon100 & 37.68 & 34.58 & 33.71 & 22.86 & $2.63 \text{e3}$ & $5.55 \text{e3}$ & $1.10 \text{e4}$ & $2.79 \text{e5}$  \\
 & LR-Muon200 & \textbf{36.26} &  \textbf{34.20} & \textbf{32.73} & \textbf{20.98} & $2.75 \text{e3}$  & $5.69 \text{e3}$ & $1.11 \text{e4}$ & $2.88 \text{e5}$ \\
\hline
\hline
\multirow{7}{*}{FineWebEdu10B} 
 & SGDM & 82.63 & 73.48 & 73.48 & 47.56 & $2.57 \text{e3}$ & $5.27 \text{e3}$ & $1.20 \text{e4}$ & $2.76 \text{e5}$ \\
 & AdamW & 41.82 & 32.96 & 29.12 & 22.54 & $2.60 \text{e3}$ & $5.29 \text{e3}$ & $1.20 \text{e4}$ & $2.78 \text{e5}$ \\
  & GaLore & 64.04 & 68.13 & 45.13 & 43.56 & $ 2.57 \text{e3}$ & $5.27 \text{e3}$ & $1.20 \text{e4}$ & $2.76 \text{e5}$ \\
 & SOAP & 40.32 & 28.10 & 27.79 & 24.54 & $2.60 \text{e3}$ & $5.29 \text{e3}$ & $1.20 \text{e4}$ & $2.78 \text{e5}$ \\
 & Muon & \textbf{28.57} & 30.72 & 29.47 & 22.67 & $2.68 \text{e3}$ & $5.84 \text{e3}$ & $1.33 \text{e4}$ & $3.25 \text{e5}$ \\
 & LR-Muon100 & 30.52 & 27.72 & 27.24 &   21.36&  $2.72 \text{e3}$ & $5.66 \text{e3}$ & $1.24 \text{e4}$  & $2.88 \text{e5}$ \\
 & LR-Muon200 & 29.29 & \textbf{27.65} & \textbf{26.87} & \textbf{19.54} &$2.81 \text{e3}$  & $5.70 \text{e3}$ & $1.26 \text{e4}$ & $2.98 \text{e5}$ \\
\hline
\end{tabular}}
\end{table*}

\begin{figure}[H]
    \centering
    \includegraphics[width=0.75\linewidth]{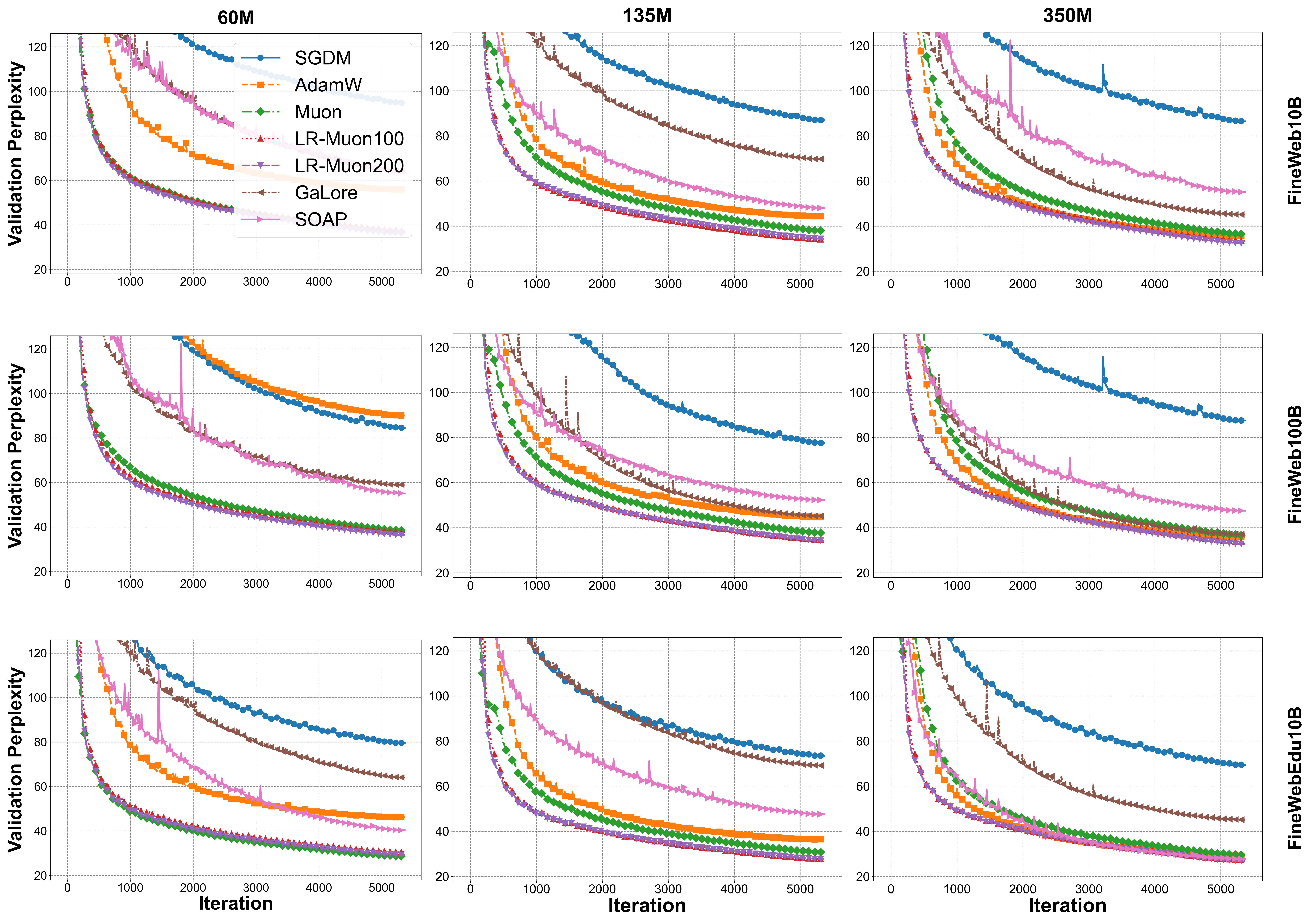}
    \caption{Comparison of validation perplexity and computational time for all competing methods in training LLaMA models.}
    \label{fig:llamaval_perplexity}
\end{figure}

We apply low-rank Muon, Muon, AdamW, GaLore, SOAP and SGD with momentum (SGDM) for pretraining LLaMA. Similar to Section \ref{subsec:num-gp}, we implement two versions of low-rank Muon with rank parameters $100$ and $200$, and we use AdamW to train the embedding and head layers for both Muon and low-rank Muon. We initialize all methods with the same weights from pretrained LLaMA models, and terminate all methods after one training epoch, consisting of 5000 iterations. We compare all methods using validation perplexity. The algorithmic parameters are carefully tuned to suit each method well in terms of computational performance. More details about the experimental setups, including algorithm and LLaMA model parameters, are provided in supplementary materials.

% \begin{figure}[htbp]
%     \centering
% \includegraphics[width=0.32\linewidth]{images/LlamA/llama_10B_60M.png}
% \includegraphics[width=0.305\linewidth]{images/LlamA/llama_10B_135M.png}
% \includegraphics[width=0.305\linewidth]{images/LlamA/llama_10B_350M.png}\\
% \includegraphics[width=0.32\linewidth]{images/LlamA/llama_100B_60M.png}
% \includegraphics[width=0.305\linewidth]{images/LlamA/llama_100B_135M.png}
% \includegraphics[width=0.305\linewidth]{images/LlamA/llama_100B_350M.png}\\
% \includegraphics[width=0.32\linewidth]{images/LlamA/llama_edu_60M.png}
% \includegraphics[width=0.305\linewidth]{images/LlamA/llama_edu_135M.png}
% \includegraphics[width=0.305\linewidth]{images/LlamA/llama_edu_350M.png}\\
%     \caption{Comparison of validation perplexity versus computational time for all competing methods in training LLaMA models.}
%     \label{fig:llamaval_perplexity}
% \end{figure}

%We visualize the top 100 singular values of the momentum updates $\{M^k\}$ associated with the Q, K, and V matrices for a 350M LLaMA model trained by Muon on the FineWeb100B dataset in Figure~\ref{fig:lr1-llama}. The first and second rows of the figure display the top 100 singular values across different training iterations and neural network layers, respectively. This visualization partly supports the low-rank nature of the momentum updates.

We present a comparison of validation perplexity and computational time in Table \ref{tab:comparison-llama}. From this table, we observe that our low-rank Muon consistently achieves better validation perplexity compared to Muon, GaLore, and SOAP for the majority of model sizes and tested datasets, while vastly improving upon the validation perplexity achieved by AdamW and SGDM. We also
observe that the computational time of our low-rank Muon improves upon that of vanilla Muon, though all Muon-type methods remain slower than SGDM and AdamW. These observations demonstrate that our low-rank orthogonalization produces more generalizable LLaMA models across all tested model sizes and datasets. Moreover, it reduces computational time compared to the Newton-Schulz iterations for orthogonalization, although the improvement is less significant than the gains discussed in Section \ref{subsec:muon-smso}, as most of the computational time is dominated by the forward and backward passes of the neural networks.

We plot the convergence behavior of validation perplexity versus training iterations in Figure~\ref{fig:llamaval_perplexity}. From this figure, we observe that for the model size of 60M, all Muon-type methods significantly outperforms AdamW and SGDM, while our low-rank Muon has similar performance compared to the vanilla Muon. For the 135M and 350M model size, our low-rank Muon significantly outperforms the vanilla Muon and other training methods. These observations support that our low-rank Muon consistently outperforms all other methods for training LLaMA models.

%Based on the above observations, we conclude that our low-rank Muon improves upon existing methods for training LLaMA models across the model sizes and datasets we tested.

\newpage

\bibliographystyle{tmlr}
\bibliography{ref}

\newpage

\renewcommand{\title}{Low-rank orthogonalization for large-scale matrix optimization with applications to foundation model training}
\maketitle

\section{Appendix: Proof of the Main Results}\label{sec:pf}
In this section, we provide the proofs of our main results presented in Section \ref{sec:muon}, specifically Theorems \ref{thm:msgn-qqm} to \ref{thm:s-muon-k}.

We first provide three technical lemmas, whose proofs can be found in \citep[Lemmas 1 to 3]{he2025complexity} and are therefore omitted here. The next lemma provides an expansion for the $\alpha$-power of the Frobenius norm, generalizing the well-known identity $\|U + V\|_F^2 = \|U\|_F^2 + 2\langle U,V\rangle + \|V\|_F^2$ and inequality $\|U + V\|_F^2\le (1+c)\|U\|_F^2+(1+1/c)\|V\|_F^2$ for all $U,V\in\mathbb{R}^{m\times n}$ and $c>0$. 

\begin{lemma}\label{lem:expand-nonsquare}
For all $\alpha\in(1,2]$, $U,V\in\mathbb{R}^{m\times n}$, and $c>0$, it holds that 
\begin{align}
&\|U+V\|_F^\alpha \le \|U\|_F^\alpha + \alpha\|U\|_F^{\alpha-2} \langle U,V\rangle + 2\|V\|_F^\alpha,\label{open-alpha}\\
&\|U+V\|_F^\alpha\le (1+c)\|U\|_F^\alpha + (2 + (\alpha-1)^{\alpha-1}c^{1-\alpha})\|V\|_F^\alpha. \label{open-alpha-2}   
\end{align}
\end{lemma}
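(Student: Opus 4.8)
The plan is to reduce both inequalities in Lemma~\ref{lem:expand-nonsquare} to elementary one‑variable estimates. First I would note that every quantity in \eqref{open-alpha} depends on $U,V$ only through $a:=\|U\|_F$, $b:=\|V\|_F$ and $\tau:=\langle U,V\rangle$, which by Cauchy--Schwarz obeys $|\tau|\le ab$, together with the identity $\|U+V\|_F^2=a^2+2\tau+b^2$. Hence \eqref{open-alpha} is equivalent to
\[
(a^2+2\tau+b^2)^{\alpha/2}\ \le\ a^\alpha+\alpha a^{\alpha-2}\tau+2b^\alpha\qquad\text{for all }\tau\in[-ab,ab].
\]
The cases $a=0$ (right side is $2b^\alpha\ge b^\alpha$, left side $b^\alpha$) and $b=0$ (both sides $a^\alpha$) are trivial, so I would assume $a,b>0$ and study $F(\tau):=a^\alpha+\alpha a^{\alpha-2}\tau+2b^\alpha-(a^2+2\tau+b^2)^{\alpha/2}$ on $[-ab,ab]$, where $a^2+2\tau+b^2\ge(a-b)^2\ge0$.

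Next I would exploit convexity. Since $\alpha\le2$, the map $s\mapsto s^{\alpha/2}$ is concave, so $\tau\mapsto(a^2+2\tau+b^2)^{\alpha/2}$ is concave and $F$ is convex on $[-ab,ab]$; indeed $F''(\tau)=-\alpha(\alpha-2)(a^2+2\tau+b^2)^{\alpha/2-2}\ge0$. Its only interior critical point is $\tau^\star=-b^2/2$, which lies in $[-ab,ab]$ exactly when $b\le2a$. Therefore the minimum of $F$ is attained at $\tau^\star$ when $b\le2a$ and at the left endpoint $\tau=-ab$ when $b>2a$ (as then $F'\ge0$ on the interval), and it suffices to check $F\ge0$ at that point. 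At $\tau^\star$ one gets $F(\tau^\star)=2b^\alpha-\tfrac{\alpha}{2}a^{\alpha-2}b^2$, which is nonnegative iff $(b/a)^{2-\alpha}\le4/\alpha$; since $b\le2a$ this follows from $(b/a)^{2-\alpha}\le2^{2-\alpha}\le2\le4/\alpha$. At $\tau=-ab$ one has $a^2+2\tau+b^2=(b-a)^2$, so $F(-ab)=a^\alpha-\alpha a^{\alpha-1}b+2b^\alpha-(b-a)^\alpha\ge a^\alpha-\alpha a^{\alpha-1}b+b^\alpha$ using $(b-a)^\alpha\le b^\alpha$; writing $x:=a/b\in(0,\tfrac12)$ this equals $b^\alpha\bigl(1-x^{\alpha-1}(\alpha-x)\bigr)$, and a short one‑variable computation shows $g(x)=x^{\alpha-1}(\alpha-x)$ satisfies $g'(x)=\alpha x^{\alpha-2}(\alpha-1-x)$, so $g$ is maximized over $x>0$ at $x=\alpha-1$ with value $(\alpha-1)^{\alpha-1}\le1$ for $\alpha\in(1,2]$. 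Hence $F(-ab)\ge0$, which completes the proof of \eqref{open-alpha}.

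For \eqref{open-alpha-2} I would deduce it from \eqref{open-alpha}. Bounding $\langle U,V\rangle\le\|U\|_F\|V\|_F$ in \eqref{open-alpha} gives $\|U+V\|_F^\alpha\le\|U\|_F^\alpha+\alpha\|U\|_F^{\alpha-1}\|V\|_F+2\|V\|_F^\alpha$. Applying Young's inequality $XY\le X^p/p+Y^q/q$ with conjugate exponents $p=\alpha/(\alpha-1)$ and $q=\alpha$ to the product $\bigl(\lambda\|U\|_F^{\alpha-1}\bigr)\bigl(\lambda^{-1}\|V\|_F\bigr)$ yields, for every $\lambda>0$,
\[
\alpha\|U\|_F^{\alpha-1}\|V\|_F\ \le\ (\alpha-1)\lambda^{\alpha/(\alpha-1)}\|U\|_F^\alpha+\lambda^{-\alpha}\|V\|_F^\alpha .
\]
Choosing $\lambda=\bigl(c/(\alpha-1)\bigr)^{(\alpha-1)/\alpha}$ makes the first coefficient equal to $c$ and the second equal to $(\alpha-1)^{\alpha-1}c^{1-\alpha}$, which gives exactly \eqref{open-alpha-2}. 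The stated quadratic identities are the $\alpha=2$ specialization, where the casework collapses since $a^{\alpha-2}=1$ and $s\mapsto s^{\alpha/2}$ is affine. I expect the only delicate point to be the endpoint estimate in the regime $b>2a$, namely verifying $\sup_{x>0}x^{\alpha-1}(\alpha-x)=(\alpha-1)^{\alpha-1}\le1$ for $\alpha\in(1,2]$; the remainder is bookkeeping.
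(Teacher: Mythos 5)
Your proof is correct. Note that the paper does not actually prove this lemma: it is stated as a known result whose proof is deferred to \cite[Lemmas 1 to 3]{he2025complexity}, so there is no in-paper argument to compare against. Your self-contained route is sound: the reduction of \eqref{open-alpha} to the scalar inequality $(a^2+2\tau+b^2)^{\alpha/2}\le a^\alpha+\alpha a^{\alpha-2}\tau+2b^\alpha$ on $\tau\in[-ab,ab]$ is legitimate because both sides depend on $(U,V)$ only through $a=\|U\|_F$, $b=\|V\|_F$, $\tau=\langle U,V\rangle$; the convexity of $F$, the location of the critical point $\tau^\star=-b^2/2$, the split at $b\le 2a$ versus $b>2a$, and the endpoint estimate via $\sup_{x>0}x^{\alpha-1}(\alpha-x)=(\alpha-1)^{\alpha-1}\le 1$ all check out (I verified $g'(x)=\alpha x^{\alpha-2}(\alpha-1-x)$ and $F(\tau^\star)=2b^\alpha-\tfrac{\alpha}{2}a^{\alpha-2}b^2\ge 0$ under $b\le 2a$). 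The derivation of \eqref{open-alpha-2} from \eqref{open-alpha} by Cauchy--Schwarz plus Young's inequality with $\lambda=(c/(\alpha-1))^{(\alpha-1)/\alpha}$ reproduces the stated constants exactly; this is the standard way such bounds are obtained and is presumably close in spirit to the cited source. Two small points you already handle but should state explicitly in a written version: the convention $\|U\|_F^{\alpha-2}\langle U,V\rangle:=0$ when $U=0$ (otherwise the right-hand side of \eqref{open-alpha} is ill-defined), and the degenerate case $\alpha=2$, where $F'\equiv 0$ so ``the only interior critical point is $\tau^\star$'' is not literally true, though $F\equiv b^2\ge 0$ makes the claim trivial there.
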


The next lemma provides an estimation of the partial sums of series.

\begin{lemma}%\label{lem:series}
Let $\zeta(\cdot)$ be a convex univariate function. Then we have $\sum_{r=a}^b\zeta(r)\le \int^{b+1/2}_{a-1/2}\zeta(\tau)\mathrm{d}\tau$ for any integers $a,b$ satisfying $[a-1/2,b+1/2]\subset\mathrm{dom}\,\zeta$. Consequently,  one has 
\begin{align}
\sum_{r=a}^b r^{-\beta} \le \left\{\begin{array}{ll}
\ln\big(b+\frac{1}{2}\big) - \ln\big(a-\frac{1}{2}\big)&\text{if }\beta=1,  \\[4pt]
\frac{\big(b+\frac{1}{2}\big)^{1-\beta} - \big(a-\frac{1}{2}\big)^{1-\beta}}{1-\beta}&\text{if }\beta>0, \beta\neq 1.
\end{array}\right.
\label{upbd:series-ka} 
\end{align}
\end{lemma}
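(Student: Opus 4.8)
The plan is to first establish the integral comparison $\sum_{r=a}^b\zeta(r)\le\int_{a-1/2}^{b+1/2}\zeta(\tau)\,\mathrm{d}\tau$ by a term-by-term argument, and then specialize to $\zeta(\tau)=\tau^{-\beta}$ and evaluate the resulting elementary integral.

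First I would prove the pointwise estimate that, for a convex $\zeta$ and any integer $r$ with $[r-1/2,r+1/2]\subset\mathrm{dom}\,\zeta$, one has $\zeta(r)\le\int_{r-1/2}^{r+1/2}\zeta(\tau)\,\mathrm{d}\tau$. Since $r$ lies in the interior of $\mathrm{dom}\,\zeta$, the function $\zeta$ admits a subgradient $g$ at $r$, so the affine minorant $\ell(\tau):=\zeta(r)+g(\tau-r)$ satisfies $\ell(\tau)\le\zeta(\tau)$ on $[r-1/2,r+1/2]$. Integrating and using $\int_{r-1/2}^{r+1/2}(\tau-r)\,\mathrm{d}\tau=0$ gives $\int_{r-1/2}^{r+1/2}\zeta(\tau)\,\mathrm{d}\tau\ge\int_{r-1/2}^{r+1/2}\ell(\tau)\,\mathrm{d}\tau=\zeta(r)$. (Equivalently, this is the left Hermite--Hadamard inequality, which one could also derive from the midpoint convexity bound $\zeta(r+s)+\zeta(r-s)\ge 2\zeta(r)$ integrated over $s\in[0,1/2]$.) Summing this over $r=a,\ldots,b$ and noting that the intervals $\{[r-1/2,r+1/2]\}_{r=a}^{b}$ tile $[a-1/2,b+1/2]$ up to a finite set of endpoints yields $\sum_{r=a}^b\zeta(r)\le\sum_{r=a}^b\int_{r-1/2}^{r+1/2}\zeta(\tau)\,\mathrm{d}\tau=\int_{a-1/2}^{b+1/2}\zeta(\tau)\,\mathrm{d}\tau$, which is the first assertion.

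For the consequence, I would apply this with $\zeta(\tau)=\tau^{-\beta}$, which is convex on $(0,\infty)$ because $\zeta''(\tau)=\beta(\beta+1)\tau^{-\beta-2}>0$ for $\beta>0$; since $a$ is a positive integer, $[a-1/2,b+1/2]\subset(0,\infty)=\mathrm{dom}\,\zeta$, so the comparison applies. It then remains to compute $\int_{a-1/2}^{b+1/2}\tau^{-\beta}\,\mathrm{d}\tau$, which equals $\ln(b+\tfrac12)-\ln(a-\tfrac12)$ when $\beta=1$ and $\frac{1}{1-\beta}\big((b+\tfrac12)^{1-\beta}-(a-\tfrac12)^{1-\beta}\big)$ when $\beta\in(0,1)\cup(1,\infty)$, giving \eqref{upbd:series-ka}.

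There is no substantive obstacle here: the argument is elementary. The only point that deserves a line of care is the existence of the subgradient used in the pointwise estimate (equivalently, the validity of the midpoint/Hermite--Hadamard inequality), which is guaranteed precisely by the hypothesis that each integer index $r$ lies in the interior of $\mathrm{dom}\,\zeta$.
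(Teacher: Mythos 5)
Your proof is correct. The paper itself omits the proof of this lemma, deferring to the cited reference \cite[Lemmas 1 to 3]{he2025complexity}, and your argument is the standard one for exactly this statement: the left Hermite--Hadamard (midpoint) inequality $\zeta(r)\le\int_{r-1/2}^{r+1/2}\zeta(\tau)\,\mathrm{d}\tau$, justified by a supporting affine minorant at the interior point $r$, summed over the unit intervals tiling $[a-1/2,b+1/2]$, followed by the elementary evaluation of $\int_{a-1/2}^{b+1/2}\tau^{-\beta}\,\mathrm{d}\tau$ in the two cases $\beta=1$ and $\beta\ne1$. The one hypothesis you use implicitly in the specialization, namely $a\ge 1$ so that $a-\tfrac12>0$, is exactly how the lemma is invoked in the paper, so there is no gap.
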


We next provide a lemma that will be used to derive complexity bounds for our methods.

\begin{lemma}\label{lem:rate-complexity}
Let $\beta\in(0,1)$ and $u\in(0,1/e)$ be given. Then, $v^{-\beta}\ln v\le 2u/\beta$ holds for all $v\ge(u^{-1}\ln(1/u))^{1/\beta}$.
\end{lemma}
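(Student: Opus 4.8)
The plan is to reduce the two-parameter claim to a single-variable inequality via the monotone substitution $t=v^\beta$. Since $x\mapsto x^\beta$ is increasing on $(0,\infty)$, the hypothesis $v\ge(u^{-1}\ln(u^{-1}))^{1/\beta}$ is equivalent to $t\ge t_0:=u^{-1}\ln(u^{-1})$, and, writing $v=t^{1/\beta}$, one has $v^{-\beta}\ln v=\tfrac{1}{t}\cdot\tfrac{1}{\beta}\ln t=\tfrac{\ln t}{\beta t}$. Hence the target bound $v^{-\beta}\ln v\le 2u/\beta$ is equivalent to $\tfrac{\ln t}{t}\le 2u$ for all $t\ge t_0$, an inequality that no longer involves $\beta$.

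Next I would invoke monotonicity of $g(t):=\ln t/t$. Because $u\in(0,e^{-1})$, we have $u^{-1}>e$ and $\ln(u^{-1})>1$, so $t_0=u^{-1}\ln(u^{-1})>e$. Since $g'(t)=(1-\ln t)/t^2<0$ for $t>e$, the function $g$ is decreasing on $[t_0,\infty)$, so it suffices to check the single inequality $g(t_0)\le 2u$.

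Finally I would evaluate $g(t_0)$ explicitly. Setting $s:=\ln(u^{-1})>1$,
\[
g(t_0)=\frac{\ln\!\big(u^{-1}\ln(u^{-1})\big)}{u^{-1}\ln(u^{-1})}
=\frac{\ln(u^{-1})+\ln\ln(u^{-1})}{u^{-1}\ln(u^{-1})}
=u\Big(1+\frac{\ln s}{s}\Big),
\]
and since $\ln s\le s$ for all $s>0$ we get $\ln s/s\le 1$, hence $g(t_0)\le 2u$, which closes the argument after unwinding the substitution. The computations are entirely routine; the only points needing (minor) care are confirming $t_0>e$ so that the monotonicity of $g$ applies on all of $[t_0,\infty)$, and checking that the substitution $t=v^\beta$ transports the hypothesis correctly. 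I do not anticipate any genuine obstacle.
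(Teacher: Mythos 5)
Your proof is correct and complete: the substitution $t=v^\beta$ correctly transports the hypothesis to $t\ge t_0=u^{-1}\ln(u^{-1})$, the check $t_0>e$ (from $u^{-1}>e$, $\ln(u^{-1})>1$) legitimizes the monotonicity of $\ln t/t$ on $[t_0,\infty)$, and the evaluation $g(t_0)=u\bigl(1+\tfrac{\ln s}{s}\bigr)\le 2u$ with $s=\ln(u^{-1})>1$ closes the argument. Note that the paper itself does not prove this lemma but cites it from an external reference (\cite[Lemmas 1--3]{he2025complexity}); your argument is the natural self-contained one (reduce to the decreasing function $\ln t/t$ past $e$ and evaluate at the threshold), so there is nothing to flag.
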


We next establish a descent property for $f$ along a matrix-signed direction.

\begin{lemma}\label{lem:tech-msd}
Suppose that Assumption \ref{asp:basic} holds. Let $X,M\in\R^{m\times n}$ and $\eta>0$ be given, and let $X^+=X - \eta \mathrm{msgn}(M)$. Then we have 
\begin{align}\label{ineq:f-descent-ix}
&f(X^+)\le f(X)  - \eta \|\nabla f(X)\|_* + 2\eta\|\nabla f(X) - M\|_* + \frac{L_*\eta^2}{2},
\end{align}
where $L_*$ is given in Assumption \ref{asp:basic}.
\end{lemma}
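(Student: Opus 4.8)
The plan is to verify the descent inequality \eqref{ineq:f-descent-ix} by combining the smoothness estimate \eqref{f-descent} with two key properties of the matrix sign: first, that $\|\mathrm{msgn}(M)\| = 1$ in the spectral norm (it is semi-orthogonal), so $\|X^+ - X\|^2 = \eta^2\|\mathrm{msgn}(M)\|^2 = \eta^2$; and second, that $\langle M, \mathrm{msgn}(M)\rangle = \|M\|_*$, which follows from the SVD $M = U\Sigma V^T$ and $\mathrm{msgn}(M) = UV^T$, since $\langle U\Sigma V^T, UV^T\rangle = \Tr(\Sigma) = \|M\|_*$.

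First I would apply \eqref{f-descent} with $Y = X^+ = X - \eta\,\mathrm{msgn}(M)$ to get
\begin{align*}
f(X^+) \le f(X) - \eta\langle \nabla f(X), \mathrm{msgn}(M)\rangle + \frac{L_*\eta^2}{2}.
\end{align*}
Then I would rewrite the inner product term by adding and subtracting $M$:
\begin{align*}
-\langle \nabla f(X), \mathrm{msgn}(M)\rangle = -\langle M, \mathrm{msgn}(M)\rangle + \langle M - \nabla f(X), \mathrm{msgn}(M)\rangle = -\|M\|_* + \langle M - \nabla f(X), \mathrm{msgn}(M)\rangle.
\end{align*}
For the cross term I would use the trace-norm/spectral-norm duality (Hölder's inequality for Schatten norms): $|\langle A, B\rangle| \le \|A\|_*\|B\|$, so $|\langle M - \nabla f(X), \mathrm{msgn}(M)\rangle| \le \|M - \nabla f(X)\|_* \cdot \|\mathrm{msgn}(M)\| = \|\nabla f(X) - M\|_*$. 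This bounds the cross term by $\|\nabla f(X) - M\|_*$.

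Finally I would handle $-\|M\|_*$: using $\|M\|_* = \|\nabla f(X) - (\nabla f(X) - M)\|_* \ge \|\nabla f(X)\|_* - \|\nabla f(X) - M\|_*$ by the triangle inequality for the nuclear norm, so $-\|M\|_* \le -\|\nabla f(X)\|_* + \|\nabla f(X) - M\|_*$. Collecting the two contributions of $\|\nabla f(X) - M\|_*$ gives the coefficient $2$ in \eqref{ineq:f-descent-ix}. The argument is essentially routine given the standard facts about $\mathrm{msgn}$; the only mild subtlety is being careful that all the norm-duality and triangle inequalities are applied with the matching norm pair ($\|\cdot\|_*$ and $\|\cdot\|$), and handling the degenerate case $M = 0$ separately, where $\mathrm{msgn}(M)$ is taken to be $0$ (or any valid convention) and the inequality holds trivially since $-\|\nabla f(X)\|_* + 2\|\nabla f(X)\|_* \ge 0$. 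I do not anticipate a genuine obstacle here.
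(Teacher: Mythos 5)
Your proposal is correct and follows essentially the same route as the paper's proof: apply the smoothness bound \eqref{f-descent}, split the inner product by adding and subtracting $M$, bound the cross term via the trace H\"older inequality with $\|\mathrm{msgn}(M)\|\le 1$ and $\langle M,\mathrm{msgn}(M)\rangle=\|M\|_*$, and finish with the nuclear-norm triangle inequality to produce the factor $2$. The remark about the degenerate case $M=0$ is a harmless extra (the paper defines $\mathrm{msgn}$ only for nonzero matrices), and no gap remains.
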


\begin{proof}
By the definition of the matrix-sign function, one has $\|\mathrm{msgn}(M)\|\le1$, and $\langle M, \mathrm{msgn}(M)\rangle = \|M\|_*$. It then follows from these and \eqref{f-descent} with $Y=X^+$ that
\begin{align*}
f(X^+) & \overset{\eqref{f-descent}}{\le} f(X) + \langle \nabla f(X), X^+-X \rangle + \frac{L_*}{2}\|X^+-X\|^2\nonumber\\
&= f(X) + \langle M, X^+-X \rangle + \langle \nabla f(X) - M, X^+-X \rangle + \frac{L_*}{2}\|X^+-X\|^2\nonumber\\
&\le f(X) - \eta \langle M, \mathrm{msgn}(M)\rangle + \eta\|\nabla f(X) - M\|_* \|\mathrm{msgn}(M)\|  + \frac{L_*\eta^2}{2}\|\mathrm{msgn}(M)\|^2\nonumber\\
&\le f(X) - \eta \|M\|_* + \eta\|\nabla f(X) - M\|_* + \frac{L_*\eta^2}{2}\\
&\le f(X) - \eta \|\nabla f(X)\|_* + 2\eta\|\nabla f(X) - M\|_* + \frac{L_*\eta^2}{2},
\end{align*}
where the second inequality is due to $X^+= X-\eta\mathrm{msgn}(M)$ and the trace H\"older inequality, the third inequality is due to $\|\mathrm{msgn}(M)\|\le1$ and $\langle M, \mathrm{msgn}(M)\rangle = \|M\|_*$, and the last inequality follows from the triangular inequality. Hence, the conclusion \eqref{ineq:f-descent-ix} holds as desired.
%where %the second inequality follows from $\|B\|^{q+1}\le 2^{p}(\|A\|^{q+1} +\|A-B\|^{q+1})$ for all $A,B\in\R^{m\times n}$ due to the convexity of $\|\cdot\|^{q+1}$. 
\end{proof}

\subsection{Proof of the Main Results in Section \protect\ref{subsec:muon-smso}}\label{subsec:pf-srt}

In this subsection, we prove Theorem \ref{thm:msgn-qqm}.
\begin{proof}[\textbf{Proof of Theorem \ref{thm:msgn-qqm}}]
The proof of \eqref{upbd:exq-M-HHT-1} can be found in \citep[Theorem 10.5]{halko2011finding}. Next, we prove $\mathrm{msgn}(QQ^TM)=Q\mathrm{msgn}(Q^TM)$. Let $Q^TM=U_Q\Sigma V^T$ be the reduced SVD of $Q^TM$, where $U_Q$ and $V$ have orthogonal columns, and $\Sigma$ is diagonal. Denote $U=QU_Q$. Since $Q$ and $U_Q$ have orthogonal columns, we obtain that $U$ also has orthogonal columns. Thus, $QQ^TM=U\Sigma V^T$ represents a reduced SVD of $QQ^TM$. Therefore, we have $\mathrm{msgn}(QQ^TM)=UV^T=QU_QV^T=Q\mathrm{msgn}(Q^TM)$.
\end{proof}

\subsection{Proof of the Main Results in Section \protect\ref{subsec:d-muon}}\label{subsec:pf-dmuon}

In this subsection, we give proofs of Theorems \ref{thm:dm-f-conv} and \ref{thm:d-muon}.

\begin{proof}[\textbf{Proof of Theorem \ref{thm:dm-f-conv}}]
Recall from Algorithm \ref{alg:dq-f} that $M^k_O=\mathrm{msgn}(M_Q^k)$ for all $k\ge0$. Using this and Lemma~\ref{lem:tech-msd} with $(X,X^+,M,\eta)=(X^k,X^{k+1},M_Q^k,\eta_k)$, we obtain that for all $k\ge0$, 
\begin{align}
f(X^{k+1}) & \le f(X^k) - \eta_k\|\nabla f(X^k)\|_*  + 2\eta_k\|\nabla f(X^k) - M_Q^k\|_* + \frac{L_*\eta_k^2}{2}.\label{ineq:f-descent-muond-1}
\end{align}
Summing \eqref{ineq:f-descent-muond-1} over $k=0,\ldots,K-1$ and using $f(X^K)\ge f_{\mathrm{low}}$, we obtain that for all $K\ge1$,
\begin{align}
&f_{\mathrm{low}} \le f(X^K) \overset{\eqref{ineq:f-descent-muond-1}}{\le} f(X^0) -  \eta_{K-1}\sum_{k=0}^{K-1}\|\nabla f(X^k)\|_* + \sum_{k=0}^{K-1}\Big(2\eta_k\|\nabla f(X^k) - M_Q^k\|_* + \frac{L_*\eta_k^2}{2}\Big),\label{ineq:flow-fx0-error-f}
\end{align}
where the last inequality is due to \eqref{ineq:f-descent-muond-1} and the fact that $\{\eta_k\}$ is nonincreasing. Rearranging this inequality and using $\eta_k=(k+1)^{-1/2}$ for all $k\ge0$, we obtain that for all $K\ge3$,
\begin{align}
\frac{1}{K}\sum_{k=0}^{K-1}\|\nabla f(X^k)\|_* & \overset{\eqref{ineq:flow-fx0-error-f}}{\le} \frac{f(X^0) - f_{\mathrm{low}}}{K\eta_{K-1}} + \frac{\sum_{k=0}^{K-1}\Big(2\eta_k\|\nabla f(X^k) - M_Q^k\|_* + \frac{L_*\eta_k^{2}}{2}\Big)}{K\eta_{K-1}}\nonumber\\
& = \frac{f(X^0) - f_{\mathrm{low}}}{K^{1/2}} + \frac{\sum_{k=0}^{K-1}\Big(\frac{2\|\nabla f(X^k) - M_Q^k\|_*}{(k+1)^{1/2}} + \frac{L_*}{2(k+1)}\Big)}{K^{1/2}}\nonumber\\
&\le \frac{f(X^0) - f_{\mathrm{low}} + L_*\ln K}{K^{1/2}} + \frac{2\sum_{k=0}^{K-1}\frac{\|\nabla f(X^k) - M_Q^k\|_*}{(k+1)^{1/2}}}{K^{1/2}},\nonumber%\label{upbd-ave-gdf-d}
\end{align}
where the last inequality follows from $\sum_{k=0}^{K-1}1/(k+1)\le\ln(2K+1)\le2\ln K$ for all $K\ge3$ due to \eqref{upbd:series-ka}. Hence, the conclusion \eqref{ineq:conv-gurara-1} holds as desired.
\end{proof}

\begin{proof}[\textbf{Proof of Theorem \ref{thm:d-muon}}]
Using \eqref{upbd-apx-error}, \eqref{def:Ud}, the definition of $\{(\eta_k,\delta_k)\}$, and the same arguments as for proving \eqref{ineq:conv-gurara-1}, we obtain that for all $K\ge3$,
\begin{align}
\min_{0\le k\le K-1}\{\|\nabla f(X^k)\|\} & \le \frac{(f(X^0) - f_{\mathrm{low}} + L_*)\ln K}{K^{1/2}} + \frac{2}{K^{1/2}}\sum_{k=0}^{K-1}\frac{\delta_k}{(k+1)^{1/2}}\nonumber\\
&= \frac{(f(X^0) - f_{\mathrm{low}} + L_*)\ln K}{K^{1/2}} + \frac{2}{K^{1/2}}\sum_{k=0}^{K-1}\frac{1}{k+1}\nonumber\\
&\le \frac{(f(X^0) - f_{\mathrm{low}} + L_*+2)\ln K}{K^{1/2}} \overset{\eqref{def:Ud}}{=} \frac{U_{\mathrm{gd}}\ln K}{K^{1/2}},\label{upbd:ineq:d-mingrad}
\end{align}
where the last inequality follows from $\sum_{k=0}^{K-1}1/(k+1)\le\ln(2K+1)\le2\ln K$ for all $K\ge3$ due to \eqref{upbd:series-ka}. In addition, by Lemma \ref{lem:rate-complexity} with $(\beta,u,v)=(1/2,\epsilon/(4U_{\mathrm{gd}}),K)$, one can see that 
\begin{align*}
K^{-1/2}\ln K \le \frac{\epsilon}{U_{\mathrm{gd}}}\qquad\forall K\ge \Big(\frac{4U_{\mathrm{gd}}}{\epsilon}\ln\Big(\frac{4U_{\mathrm{gd}}}{\epsilon}\Big)\Big)^{2},
\end{align*}
which along with \eqref{upbd:ineq:d-mingrad} implies that Theorem \ref{thm:d-muon} holds.
    
\end{proof}

\subsection{Proof of the Main Results in Section \protect\ref{subsec:s-muon}}\label{subsec:pf-smuon}

In this subsection, we begin by establishing some technical lemmas and use them to prove Theorems \ref{thm:s-muon-k}. For convenience, we define a sequence of potentials for Algorithm \ref{alg:sp} as follows:
\begin{align}\label{def:pot-a}
\mathcal{P}_k := f(X^k) + p_k\|\nabla f(X^k) - M^k\|^\alpha_F\qquad\forall k\ge0,    
\end{align}
where the sequence $\{(X^k,M^k)\}$ is generated by Algorithm \ref{alg:sp}, and $\{p_k\}$ is a sequence of positive scalars that will be specified separately later. The following lemma presents a recurrence relation for the estimation error of the gradient estimators $\{M^k\}$ generated by Algorithm \ref{alg:sp}. %Its proof is similar to that of \citep[Lemma 6]{he2025complexity} and thus is omitted here.

\begin{lemma}
Suppose that Assumptions \ref{asp:basic} and \ref{asp:ht} hold. Let $\{(X^k,M^k)\}$ be generated by Algorithm \ref{alg:sp} with input parameters $\{(\eta_k,\theta_k)\}$. Then we have for all $k\ge0$,
\begin{align}
\mathbb{E}_{\xi^{k+1}}[\|M^{k+1}-\nabla f(X^{k+1})\|_F^\alpha] \le  (1-\theta_k)\|M^k-\nabla f(X^k)\|_F^\alpha+ 3L_*^\alpha \theta_k^{1-\alpha}\eta_k^\alpha + 2\sigma^\alpha\theta_k^\alpha,\label{ineq:rec-pm}
\end{align}
where $L_*$ is given in Assumption \ref{asp:basic}, and $\sigma$ and $\alpha$ are given in Assumption \ref{asp:ht}.
\end{lemma}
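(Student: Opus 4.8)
The plan is to derive the recurrence by expanding the momentum update \eqref{update-mk} and applying the $\alpha$-power expansion from Lemma~\ref{lem:expand-nonsquare}. First I would write $M^{k+1} - \nabla f(X^{k+1}) = (1-\theta_k)(M^k - \nabla f(X^k)) + (1-\theta_k)(\nabla f(X^k) - \nabla f(X^{k+1})) + \theta_k(G(X^{k+1};\xi^{k+1}) - \nabla f(X^{k+1}))$, splitting the error into a contraction term, a drift term coming from the change of iterate, and a pure-noise term. The key structural point is that $\E_{\xi^{k+1}}[G(X^{k+1};\xi^{k+1}) - \nabla f(X^{k+1})] = 0$ by Assumption~\ref{asp:ht}, so the noise term is mean-zero conditionally on the history up to iteration $k+1$ (note $X^{k+1}$ is determined by $\xi^0,\dots,\xi^k$).

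The central step is to apply \eqref{open-alpha} with $U = (1-\theta_k)(M^k - \nabla f(X^k)) + (1-\theta_k)(\nabla f(X^k) - \nabla f(X^{k+1}))$ and $V = \theta_k(G(X^{k+1};\xi^{k+1}) - \nabla f(X^{k+1}))$. Taking $\E_{\xi^{k+1}}$, the cross term $\alpha\|U\|_F^{\alpha-2}\langle U, V\rangle$ vanishes because $U$ is measurable with respect to the history and $\E_{\xi^{k+1}}[V] = 0$; the term $2\E_{\xi^{k+1}}[\|V\|_F^\alpha] = 2\theta_k^\alpha \E_{\xi^{k+1}}[\|G(X^{k+1};\xi^{k+1}) - \nabla f(X^{k+1})\|_F^\alpha] \le 2\sigma^\alpha\theta_k^\alpha$ by Assumption~\ref{asp:ht}. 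This produces the last term on the right-hand side of \eqref{ineq:rec-pm} and isolates $\|U\|_F^\alpha$.

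It remains to bound $\|U\|_F^\alpha = \big\|(1-\theta_k)(M^k - \nabla f(X^k)) + (1-\theta_k)(\nabla f(X^k) - \nabla f(X^{k+1}))\big\|_F^\alpha$. I would pull out $(1-\theta_k)^\alpha \le (1-\theta_k)$ (valid since $\alpha \ge 1$ and $\theta_k \in (0,1]$) and then apply \eqref{open-alpha-2} with the splitting $U' = M^k - \nabla f(X^k)$, $V' = \nabla f(X^k) - \nabla f(X^{k+1})$, and the choice $c = \theta_k/(1-\theta_k)$ so that $(1-\theta_k)(1+c) = 1$. This gives $(1-\theta_k)^\alpha(1+c)\|U'\|_F^\alpha \le (1-\theta_k)\|M^k - \nabla f(X^k)\|_F^\alpha$, the contraction term. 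For the drift term, $(1-\theta_k)^\alpha\big(2 + (\alpha-1)^{\alpha-1}c^{1-\alpha}\big)\|V'\|_F^\alpha$: I would use Assumption~\ref{asp:basic}(b) together with $\|X^{k+1} - X^k\| = \eta_k\|M_O^k\| \le \eta_k$ (since $M_O^k = \mathrm{msgn}(M_Q^k)$ has spectral norm at most $1$) to get $\|\nabla f(X^k) - \nabla f(X^{k+1})\|_* \le L_*\eta_k$, hence $\|V'\|_F \le \|V'\|_* \le L_*\eta_k$ and $\|V'\|_F^\alpha \le L_*^\alpha\eta_k^\alpha$. The coefficient simplifies: $c^{1-\alpha} = ((1-\theta_k)/\theta_k)^{\alpha-1} \le \theta_k^{1-\alpha}$, and a short arithmetic check using $(\alpha-1)^{\alpha-1}\le 1$ and $2(1-\theta_k)^\alpha\theta_k^{\alpha-1}\le 2 \le 2\theta_k^{1-\alpha}$ bounds the whole drift contribution by $3L_*^\alpha\theta_k^{1-\alpha}\eta_k^\alpha$, matching the middle term of \eqref{ineq:rec-pm}.

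The main obstacle is bookkeeping the constants in the drift coefficient so that they collapse cleanly into the factor $3$; in particular one must verify $(1-\theta_k)^\alpha\big(2 + (\alpha-1)^{\alpha-1}((1-\theta_k)/\theta_k)^{\alpha-1}\big) \le 3\theta_k^{1-\alpha}$ for all $\theta_k\in(0,1]$ and $\alpha\in(1,2]$, which follows by bounding $(1-\theta_k)^\alpha \le 1$, $(\alpha-1)^{\alpha-1}\le 1$, and $\theta_k^{\alpha-1}\le 1 \le \theta_k^{1-\alpha}$, so that the expression is at most $2\theta_k^{1-\alpha} + \theta_k^{1-\alpha} = 3\theta_k^{1-\alpha}$. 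Everything else is a routine application of the two inequalities in Lemma~\ref{lem:expand-nonsquare} and the Lipschitz bound from Assumption~\ref{asp:basic}(b); the conditional-expectation argument for killing the cross term is the only place measurability of $X^{k+1}$ with respect to $\xi^0,\dots,\xi^k$ is used.
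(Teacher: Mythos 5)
Your overall architecture matches the paper's proof (same three-term decomposition, killing the cross term via \eqref{open-alpha} and conditional unbiasedness, bounding the noise by $2\sigma^\alpha\theta_k^\alpha$, and bounding the drift via $\|X^{k+1}-X^k\|\le\eta_k$ and $\|\nabla f(X^k)-\nabla f(X^{k+1})\|_F\le L_*\eta_k$), but the contraction step has a genuine error. With your choice $c=\theta_k/(1-\theta_k)$, the coefficient produced by \eqref{open-alpha-2} in front of $\|M^k-\nabla f(X^k)\|_F^\alpha$ is $(1-\theta_k)^\alpha(1+c)=(1-\theta_k)^{\alpha-1}$, and for $\alpha\in(1,2)$ and $\theta_k\in(0,1)$ one has $(1-\theta_k)^{\alpha-1}>1-\theta_k$ (e.g.\ $\alpha=1.5$, $\theta_k=0.5$ gives $(0.5)^{0.5}\approx0.707>0.5$), so your claimed bound $(1-\theta_k)^\alpha(1+c)\|U'\|_F^\alpha\le(1-\theta_k)\|M^k-\nabla f(X^k)\|_F^\alpha$ is false; it only holds at $\alpha=2$. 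The alternative reading of your step, first replacing $(1-\theta_k)^\alpha$ by $(1-\theta_k)$ and then using $(1-\theta_k)(1+c)=1$, is even worse: it yields coefficient $1$, i.e.\ no contraction at all. Either way you do not obtain the factor $(1-\theta_k)$ that the lemma asserts, so the recurrence \eqref{ineq:rec-pm} is not established for $\alpha\in(1,2)$. (Your choice of $c$ is also undefined at $\theta_k=1$, a case the paper dispatches separately since \eqref{ineq:rec-pm} is then trivial.)

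The missing idea is that $c$ must be taken as large as the contraction allows, namely $c=(1-\theta_k)^{1-\alpha}-1$, which makes $(1+c)(1-\theta_k)^\alpha=(1-\theta_k)$ exactly; the price is that bounding the resulting $c^{1-\alpha}$ is no longer trivial. The paper does this via the Bernoulli-type inequality $(1-\theta_k)^{\alpha-1}\le 1-(\alpha-1)\theta_k$, which gives $c^{1-\alpha}\le((\alpha-1)\theta_k)^{1-\alpha}$, hence $(\alpha-1)^{\alpha-1}c^{1-\alpha}\le\theta_k^{1-\alpha}$, and then the drift coefficient $(2+\theta_k^{1-\alpha})(1-\theta_k)^\alpha L_*^\alpha\eta_k^\alpha\le 3L_*^\alpha\theta_k^{1-\alpha}\eta_k^\alpha$ exactly as you intended. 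Your bookkeeping for the drift term is fine; the flaw is confined to the choice of $c$ and the resulting loss of the $(1-\theta_k)$ factor, which is the crux of the lemma.
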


\begin{proof}
Fix any $k\ge0$. It follows from \eqref{update-mk} that    
\begin{align}
&M^{k+1}-\nabla f(X^{k+1}) \nonumber\\
&\overset{\eqref{update-mk}}{=} (1- \theta_k) (M^k-\nabla f(X^k) + \nabla f(X^k)-\nabla f(X^{k+1})) + \theta_k (G(X^{k+1};\xi^{k+1}) - \nabla f(X^{k+1})). \label{rela-mk+1-pm}
\end{align}
In addition, we observe from \eqref{upbd-apx-error-1} that $M_O^k=\mathrm{msgn}(M_Q^k)$, which along with \eqref{update-xk} implies that $\|X^{k+1}-X^k\|=\eta_k\|\mathrm{msgn}(M_Q^k)\|\le\eta_k$. Also, it follows from Assumption \ref{asp:basic} that $\E_{\xi^{k+1}}[G(X^{k+1};\xi^{k+1})-\nabla f(X^{k+1})]=0$, $\E_{\xi^{k+1}}[\|G(X^{k+1};\xi^{k+1})-\nabla f(X^{k+1})\|_F^\alpha]\le\sigma^\alpha$, and $\|\nabla f(X^k)-\nabla f(X^{k+1})\|_F\le\|\nabla f(X^k)-\nabla f(X^{k+1})\|_*\leq L_* \eta_k$. Using these, \eqref{open-alpha}, \eqref{open-alpha-2}, and \eqref{rela-mk+1-pm}, we obtain that for all $c>0$,
\begin{align}
&\mathbb{E}_{\xi^{k+1}}[\|M^{k+1}-\nabla f(X^{k+1})\|_F^\alpha]\nonumber \\
&\overset{\eqref{rela-mk+1-pm}}{=}\mathbb{E}_{\xi^{k+1}}[\|(1- \theta_k) (M^k-\nabla f(X^k)) + (1-\theta_k)(\nabla f(X^k)-\nabla f(X^{k+1})) + \theta_k (G(X^{k+1};\xi^{k+1}) - \nabla f(X^{k+1}))\|_F^\alpha]\nonumber\\
&\overset{ \eqref{open-alpha}}{\le} \|(1- \theta_k) (M^k-\nabla f(X^k)) + (1-\theta_k)(\nabla f(X^k)-\nabla f(X^{k+1}))\|_F^\alpha\nonumber\\
&\qquad\quad + 2\mathbb{E}_{\xi^{k+1}}[\|\theta_k (G(X^{k+1};\xi^{k+1}) - \nabla f(X^{k+1}))\|_F^\alpha]\nonumber\\
&\overset{\eqref{open-alpha-2}}{\le} (1+c)(1-\theta_k)^\alpha\|M^k-\nabla f(X^k)\|_F^\alpha + L_*^\alpha(2+(\alpha-1)^{\alpha-1}c^{1-\alpha})(1-\theta_k)^\alpha\eta_k^\alpha + 2\sigma^\alpha\theta_k^\alpha,\label{vr-inter}
\end{align}
where the first inequality follows from \eqref{open-alpha} and $\E_{\xi^{k+1}}[G(X^{k+1};\xi^{k+1})]=\nabla f(X^{k+1})$, the second inequality is due to \eqref{open-alpha-2}, $\E_{\xi^{k+1}}[\|G(X^{k+1};\xi^{k+1})-\nabla f(X^{k+1})\|_F^\alpha]\le\sigma^\alpha$, and $\|\nabla f(X^k)-\nabla f(X^{k+1})\|_F\leq L_* \eta_k$.

When $\theta_k = 1$, \eqref{ineq:rec-pm} clearly holds. For $\theta_k \in (0, 1)$, letting $c = (1 - \theta_k)^{1 - \alpha} - 1$ in \eqref{vr-inter}, and using the fact that $\alpha \in (1, 2]$, we have
\begin{align*}
 c^{1-\alpha} &= ((1-\theta_k)^{1-\alpha} - 1)^{1-\alpha} = \Big(\frac{1}{(1-\theta_k)^{\alpha-1}} - 1\Big)^{1-\alpha} \le \Big(\frac{1}{1-(\alpha-1)\theta_k} - 1\Big)^{1-\alpha} \le ((\alpha-1)\theta_k)^{1-\alpha},
\end{align*}
where the first inequality follows from $(1-\tau)^\beta\le 1-\beta\tau$ for all $\tau\in(-\infty,1)$ and $\beta\in[0,1]$. Combining this inequality with \eqref{vr-inter}, one has 
\begin{align*}
\mathbb{E}_{\xi^{k+1}}[\|M^{k+1}-\nabla f(X^{k+1})\|_F^\alpha] \le (1-\theta_k)\|M^k-\nabla f(X^k)\|_F^\alpha
+ L_*^\alpha(2+\theta_k^{1-\alpha})(1-\theta_k)^\alpha\eta_k^\alpha  + 2\sigma^\alpha\theta_k^\alpha,&
\end{align*}
which along with $\theta_k\in(0,1]$ and $\alpha\in(1,2]$ implies that \eqref{ineq:rec-pm} holds as desired.
\end{proof}

The following lemma establishes a descent property for the potential sequence $\{\mathcal{P}_k\}$ defined below.

\begin{lemma}\label{lem:rate-pm}
Suppose that Assumptions \ref{asp:basic} and \ref{asp:ht} hold. Let $\{(X^k,M^k)\}$ be generated by Algorithm~\ref{alg:sp} with input parameters $\{(\eta_k,\theta_k,\delta_k)\}$, $L_*$ be given in Assumption \ref{asp:basic}, $\sigma$ and $\alpha$ be given in Assumption \ref{asp:ht}, and $\varrho:=\max\{m,n\}$, and let $\{\mathcal{P}_k\}$ be defined in \eqref{def:pot-a} for $\{(X^k,M^k)\}$ and any nonincreasing positive sequence $\{p_k\}$. Then, it holds that for all $k\ge0$,
\begin{align}
\E_{\xi^{k+1}}[\mathcal{P}_{k+1}]  \le \mathcal{P}_k - \eta_k\|\nabla f(X^k)\|_* + 2\eta_k\delta_k + \frac{L_*\eta_k^2}{2} + \frac{(\alpha - 1)(2\varrho^{1/2}\eta_k)^{\alpha/(\alpha- 1)}}{\alpha^{\alpha/(\alpha-1)}(\theta_k p_k)^{1/(\alpha- 1)}}  + 3L_*^\alpha \theta_k^{1-\alpha}\eta_k^\alpha p_k + 2\sigma^\alpha\theta_k^\alpha p_k.  \label{ineq:vr-rec-sp}
\end{align}
\end{lemma}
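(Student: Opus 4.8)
The plan is to combine the descent inequality for $f$ along a matrix-signed direction (Lemma~\ref{lem:tech-msd}) with the recurrence for the gradient-estimator error (inequality~\eqref{ineq:rec-pm}), weighted by the potential coefficient $p_{k+1}$. First I would apply Lemma~\ref{lem:tech-msd} with $(X,X^+,M,\eta)=(X^k,X^{k+1},M_Q^k,\eta_k)$, using $M_O^k=\mathrm{msgn}(M_Q^k)$ from \eqref{upbd-apx-error-1}, to get
\begin{align*}
f(X^{k+1}) \le f(X^k) - \eta_k\|\nabla f(X^k)\|_* + 2\eta_k\|\nabla f(X^k) - M_Q^k\|_* + \frac{L_*\eta_k^2}{2}.
\end{align*}
The term $\|\nabla f(X^k)-M_Q^k\|_*$ must be split via the triangle inequality into $\|\nabla f(X^k)-M^k\|_* + \|M^k-M_Q^k\|_*$, and the second piece is bounded by $\delta_k$ using \eqref{upbd-apx-error-1}. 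This leaves $2\eta_k\|\nabla f(X^k)-M^k\|_*$, which I would convert to the Frobenius norm (so it can be matched against \eqref{ineq:rec-pm}) via $\|\cdot\|_*\le\varrho^{1/2}\|\cdot\|_F$, giving the term $2\varrho^{1/2}\eta_k\|\nabla f(X^k)-M^k\|_F$.

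Next I would take $\E_{\xi^{k+1}}$ of the sum $f(X^{k+1}) + p_{k+1}\|\nabla f(X^{k+1})-M^{k+1}\|_F^\alpha$. For the error term, since $\{p_k\}$ is nonincreasing, $p_{k+1}\le p_k$, so I can bound $p_{k+1}\E_{\xi^{k+1}}[\|M^{k+1}-\nabla f(X^{k+1})\|_F^\alpha]$ using \eqref{ineq:rec-pm} with coefficient $p_k$:
\begin{align*}
p_{k+1}\E_{\xi^{k+1}}[\|M^{k+1}-\nabla f(X^{k+1})\|_F^\alpha] \le (1-\theta_k)p_k\|M^k-\nabla f(X^k)\|_F^\alpha + 3L_*^\alpha\theta_k^{1-\alpha}\eta_k^\alpha p_k + 2\sigma^\alpha\theta_k^\alpha p_k.
\end{align*}
The crucial step is then to absorb the cross term $2\varrho^{1/2}\eta_k\|\nabla f(X^k)-M^k\|_F$ into the available negative contribution. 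We have at hand $p_k\|\nabla f(X^k)-M^k\|_F^\alpha$ from $\mathcal{P}_k$ and $-(1-\theta_k)p_k\|\nabla f(X^k)-M^k\|_F^\alpha$ from the recurrence, leaving a net $-\theta_k p_k\|\nabla f(X^k)-M^k\|_F^\alpha$. So I would apply Young's inequality in the form $ab \le \frac{a^q}{q} + \frac{b^s}{s}$ with conjugate exponents $q=\alpha/(\alpha-1)$ and $s=\alpha$, choosing $a,b$ so that $b^\alpha$ matches $\theta_k p_k\|\nabla f(X^k)-M^k\|_F^\alpha$; explicitly, write
\begin{align*}
2\varrho^{1/2}\eta_k\|\nabla f(X^k)-M^k\|_F \le \frac{\alpha-1}{\alpha}\Big(\frac{2\varrho^{1/2}\eta_k}{(\theta_k p_k)^{1/\alpha}}\Big)^{\alpha/(\alpha-1)} + \frac{\theta_k p_k}{\alpha}\|\nabla f(X^k)-M^k\|_F^\alpha,
\end{align*}
so the second term is dominated by the $-\theta_k p_k\|\nabla f(X^k)-M^k\|_F^\alpha$ available (since $1/\alpha\le1$), and the first term simplifies to $\frac{(\alpha-1)(2\varrho^{1/2}\eta_k)^{\alpha/(\alpha-1)}}{\alpha^{\alpha/(\alpha-1)}(\theta_k p_k)^{1/(\alpha-1)}}$, exactly the term appearing in \eqref{ineq:vr-rec-sp}. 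Collecting all pieces yields the claimed bound.

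The main obstacle is bookkeeping the constants in the Young's-inequality step so that the residual quadratic-type term is fully absorbed by $-\theta_k p_k\|\cdot\|_F^\alpha$ rather than merely partially; one must verify that the coefficient $\theta_k p_k/\alpha$ produced by Young is no larger than the $\theta_k p_k$ coefficient available, which follows from $\alpha>1$. A secondary subtlety is the order of operations with the expectation and the monotonicity of $\{p_k\}$: the substitution $p_{k+1}\le p_k$ must be done on the nonnegative error term before invoking \eqref{ineq:rec-pm}, and one should note that $\|\nabla f(X^k)-M^k\|_F$ is $\xi^{k+1}$-measurable-free (deterministic given the history up to $k$), so it passes through $\E_{\xi^{k+1}}$ unchanged. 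Everything else is routine substitution and the triangle/norm-comparison inequalities already recorded in the excerpt.
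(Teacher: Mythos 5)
Your route is exactly the paper's: Lemma~\ref{lem:tech-msd} with $M=M_Q^k$, the triangle inequality plus \eqref{upbd-apx-error-1} to replace $\|\nabla f(X^k)-M_Q^k\|_*$ by $\|\nabla f(X^k)-M^k\|_*+\delta_k$, the norm comparison $\|\cdot\|_*\le\varrho^{1/2}\|\cdot\|_F$, the recurrence \eqref{ineq:rec-pm} weighted by $p_{k+1}\le p_k$, and a Young-inequality absorption of the cross term against the net $\theta_k p_k\|\nabla f(X^k)-M^k\|_F^\alpha$. The only problem is precisely the constant bookkeeping you flagged as the main obstacle: with your split $a=2\varrho^{1/2}\eta_k/(\theta_k p_k)^{1/\alpha}$, $b=(\theta_k p_k)^{1/\alpha}\|\nabla f(X^k)-M^k\|_F$, Young gives the complementary term $\frac{\alpha-1}{\alpha}\,\frac{(2\varrho^{1/2}\eta_k)^{\alpha/(\alpha-1)}}{(\theta_k p_k)^{1/(\alpha-1)}}$, which does \emph{not} simplify to the term in \eqref{ineq:vr-rec-sp}: the lemma has $\alpha^{\alpha/(\alpha-1)}$ in the denominator, so your term is larger by the factor $\alpha^{1/(\alpha-1)}>1$. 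As written, your chain therefore proves \eqref{ineq:vr-rec-sp} only with $\alpha$ in place of $\alpha^{\alpha/(\alpha-1)}$, i.e.\ a slightly weaker inequality.

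The fix is the scaling the paper uses: split the product as $\bigl(2\varrho^{1/2}\eta_k/(\alpha\theta_k p_k)^{1/\alpha}\bigr)\cdot\bigl((\alpha\theta_k p_k)^{1/\alpha}\|\nabla f(X^k)-M^k\|_F\bigr)$. Then the $s=\alpha$ term of Young is $\frac{1}{\alpha}(\alpha\theta_k p_k)\|\nabla f(X^k)-M^k\|_F^\alpha=\theta_k p_k\|\nabla f(X^k)-M^k\|_F^\alpha$, which is exactly the budget available from $p_k-(1-\theta_k)p_k$ (so nothing is wasted), and the $q=\alpha/(\alpha-1)$ term becomes $\frac{(\alpha-1)(2\varrho^{1/2}\eta_k)^{\alpha/(\alpha-1)}}{\alpha^{\alpha/(\alpha-1)}(\theta_k p_k)^{1/(\alpha-1)}}$ since $1+1/(\alpha-1)=\alpha/(\alpha-1)$, matching \eqref{ineq:vr-rec-sp} exactly. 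All other steps of your argument (the measurability remark, the use of monotonicity of $\{p_k\}$ before invoking \eqref{ineq:rec-pm}, dropping the leftover nonpositive term) are fine and coincide with the paper's proof.
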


\begin{proof}
Fix any $k\ge0$. Recall from \eqref{upbd-apx-error-1} that $M_O^k=\mathrm{msgn}(M^k_Q)$ and $\|M^k - M_Q^k\|_*\le\delta_k$. Using these and \eqref{ineq:f-descent-ix} with $(X^+,X,M,\eta)=(X^{k+1},X^k,M_Q^k,\eta_k)$, we obtain that 
\begin{align}
f(X^{k+1}) & \le f(X^k) - \eta_k\|\nabla f(X^k)\|_* + 2\eta_k\|\nabla f(X^k) - M_Q^k\|_* + \frac{L_*\eta_k^{2}}{2}\nonumber\\
&\le f(X^k) - \eta_k\|\nabla f(X^k)\|_* + 2\eta_k\|\nabla f(X^k) - M^k\|_*  + 2\eta_k\delta_k + \frac{L_*\eta_k^{2}}{2}.\label{upbd:f-lem1-smuon}
%&\le f(X^k) - \frac{\eta_k}{2^{q+1}}\|\nabla f(X^k)\|^{q+1}_* + \Big(c(q)+\frac{1}{2}\Big)\eta_k\|\nabla f(X^k) - \widetilde{M}^k\|_*^{q+1} + L_*(q)\eta_k^{2/(1-q)}\nonumber\\
%&\le f(X^k) - \frac{\eta_k}{2^{q+1}}\|\nabla f(X^k)\|^{q+1}_* + 2^q\Big(c(q)+\frac{1}{2}\Big)\eta_k\|\nabla f(X^k) - M^k\|_*^{q+1} + 2^q\Big(c(q)+\frac{1}{2}\Big)\eta_k\delta_k^{q+1} + L_*(q)\eta_k^{2/(1-q)}\nonumber\\
%&\le f(X^k) - \frac{\eta_k}{2^{q+1}}\|\nabla f(X^k)\|^{q+1}_* + 2^q \varrho^{(q+1)/2}\Big(c(q)+\frac{1}{2}\Big)\eta_k\|\nabla f(X^k) - M^k\|_F^{q+1}\nonumber\\
%&\qquad\ + 2^q\Big(c(q)+\frac{1}{2}\Big)\eta_k\delta_k^{q+1} + L_*(q)\eta_k^{2/(1-q)},\label{upbd:f-lem1-smuon}
%&\le f(X^k) - \frac{\eta_k}{2^{q+1}}\|\nabla f(X^k)\|^{q+1}_* + 2^p\varrho^{(q+1)/2}c(q)\eta_k\|\nabla f(X^k) - {M}^k\|_F^{q+1} + 2^pc(q)\eta_k\delta_k^{q+1} + \eta_k^{2/(1-q)}L_*(q),\label{upbd:f-lem1-smuon}
\end{align}
%where the second and third inequalities are due to $\|A+B\|_*^{q+1}\le 2^q(\|A\|^{q+1}+\|B\|^{q+1})$ for all $A,B\in\R^{m\times n}$ and \eqref{upbd-apx-error-1}, and the last inequality is due to $\|A\|_*\le \sqrt{\varrho}\|A\|_F$ for all $A\in\R^{m\times n}$.
Combining this with \eqref{def:pot-a} and \eqref{ineq:rec-pm}, we obtain that 
\begin{align}
\E_{\xi^{k+1}}[\mathcal{P}_{k+1}] & \overset{\eqref{def:pot-a}}{=} \E_{\xi^{k+1}}[f(X^{k+1}) + p_{k+1}\|\nabla f(X^{k+1}) - M^{k+1}\|_F^\alpha]
\nonumber\\
&\overset{\eqref{ineq:rec-pm}\eqref{upbd:f-lem1-smuon}}{\le} f(X^k) - \eta_k\|\nabla f(X^k)\|_* + 2\eta_k\|\nabla f(X^k) - M^k\|_* + (1-\theta_k)p_{k+1}\|M^k-\nabla f(X^k)\|_F^\alpha\nonumber\\
&\qquad + 2\eta_k\delta_k + \frac{L_*\eta_k^2}{2} + 3L_*^\alpha \theta_k^{1-\alpha}\eta_k^\alpha p_{k+1} + 2\sigma^\alpha\theta_k^\alpha p_{k+1}\nonumber\\
&\le f(X^k) - \eta_k\|\nabla f(X^k)\|_* + 2\eta_k\varrho^{1/2}\|\nabla f(X^k) - M^k\|_F + (1-\theta_k)p_k\|M^k-\nabla f(X^k)\|_F^\alpha\nonumber\\
&\qquad  + 2\eta_k\delta_k + \frac{L_*\eta_k^2}{2} + 3L_*^\alpha \theta_k^{1-\alpha}\eta_k^\alpha p_k + 2\sigma^\alpha\theta_k^\alpha p_k,\label{ineq:upbd-tool1}
\end{align}
where the second inequality follows from $\|U\|_*\le \varrho^{1/2}\|U\|_F$ for all $U\in\R^{m\times n}$, and the fact that $\{p_k\}$ is nonincreasing. In addition, letting $\alpha^\prime=\alpha/(\alpha-1)$ and using the Young's inequality, we obtain that 
\begin{align*}
2\eta_k\varrho^{1/2}\|\nabla f(X^k) - M^k\|_F &\le \frac{\big((\alpha\theta_k p_k)^{1/\alpha}\|\nabla f(X^k) -M^k\|_F\big)^{\alpha}}{\alpha} + \frac{\Big(\frac{2\varrho^{1/2}\eta_k}{(\alpha\theta_k p_k)^{1/\alpha}}\Big)^{\alpha^\prime}}{\alpha^\prime}\\
& = \theta_k p_k \|\nabla f(X^k) -M^k\|_F^\alpha + \frac{(\alpha - 1)(2\varrho^{1/2}\eta_k)^{\alpha/(\alpha- 1)}}{\alpha^{\alpha/(\alpha-1)}(\theta_k p_k)^{1/(\alpha- 1)}}.
\end{align*}
This along with \eqref{ineq:upbd-tool1} implies that 
\begin{align*}
\E_{\xi^{k+1}}[\mathcal{P}_{k+1}] & \le f(X^k) + p_k\|\nabla f(X^k) - M^k\|_F^\alpha - \eta_k\|\nabla f(X^k)\|_* \nonumber\\
&\qquad + 2\eta_k\delta_k + \frac{L_*\eta_k^2}{2} + \frac{(\alpha - 1)(2\varrho^{1/2}\eta_k)^{\alpha/(\alpha- 1)}}{\alpha^{\alpha/(\alpha-1)}(\theta_k p_k)^{1/(\alpha- 1)}} + 3L_*^\alpha \theta_k^{1-\alpha}\eta_k^\alpha p_k + 2\sigma^\alpha\theta_k^\alpha p_k.
\end{align*}
The conclusion \eqref{ineq:vr-rec-sp} then follows from this and \eqref{def:pot-a}.
\end{proof}

We are now ready to prove Theorem \ref{thm:s-muon-k}.

\begin{proof}[\textbf{Proof of Theorem \ref{thm:s-muon-k}}]
	Let $\{(X^k,M^k)\}$ be generated by Algorithm \ref{alg:sp} with the definition of $\{(\eta_k,\theta_k)\}$, and 
	$\{{\mathcal P}_k\}$ be defined in \eqref{def:pot-a} with such $\{(X^k,M^k)\}$ and the following $\{p_k\}$:
	\begin{align}\label{def:pk-pm}
		p_k= (k+1)^{(\alpha^2-3\alpha+2)/(3\alpha-2)} \qquad\forall k\ge0.  
	\end{align}
	Since $\alpha\in(1,2]$, one can see that $\{p_k\}$ is nonincreasing. Also, observe from the definition of $\{(\eta_k,\theta_k)\}$ that $\{\eta_k\}\subset(0,1]$ and $\{\theta_k\}\subset(0,1]$. Hence, $\{(\eta_k,\theta_k,p_k)\}$ satisfies the assumptions in Lemma \ref{lem:rate-pm} and Algorithm \ref{alg:sp}. In addition, by \eqref{def:pot-a} and \eqref{def:pk-pm}, one has that
	%\begin{align}
	%&\mathbb{E}_{\xi^0}[{\mathcal P}_0]= f(x^0)+p_0\mathbb{E}_{\xi^0}[\|m^0-\nabla f(x^0)\|^\alpha]= f(x^0) + \mathbb{E}_{\xi^0}[\|G(x^0;\xi^0)-\nabla f(x^0)\|^\alpha]\le f(x^0)+\sigma^\alpha,\label{upbd-exq-P0-pm}\\    
	%&\mathbb{E}_{\{\xi^k\}_{k=0}^K}[{\mathcal P}_K]= \mathbb{E}_{\{\xi^k\}_{k=0}^K}[f(x^K)+p_K\|m^K-\nabla f(x^K)\|^\alpha]\ge \mathbb{E}_{\{\xi^k\}_{k=0}^K}[f(x^K)] \ge f_{\mathrm{low}}.\label{lwbd-exq-Pk-pm}
	%\end{align}
	\begin{align}
		\mathbb{E}[{\mathcal P}_0]&= f(X^0) + \mathbb{E}[\|G(X^0;\xi^0)-\nabla f(X^0)\|_F^\alpha] \le f(X^0)+\sigma^\alpha,\label{upbd-exq-P0-pm}\\    
		\mathbb{E}[{\mathcal P}_K]&\ge \mathbb{E}[f(X^K)] \ge f_{\mathrm{low}}.\label{lwbd-exq-Pk-pm}
	\end{align}
	Taking expectation on both sides of \eqref{ineq:vr-rec-sp} with respect to $\{\xi^i\}_{i=0}^{k+1}$, we have for all $k\ge0$,
	\begin{align*}
		\mathbb{E}[{\mathcal P}_{k+1}] &\le \mathbb{E}[\mathcal{P}_k] -  \eta_k \mathbb{E}[\|\nabla f(X^k)\|_*] + 2\eta_k\delta_k + \frac{L_*\eta_k^2}{2} + \frac{(\alpha - 1)(2\varrho^{1/2}\eta_k)^{\alpha/(\alpha- 1)}}{\alpha^{\alpha/(\alpha-1)}(\theta_k p_k)^{1/(\alpha- 1)}} \\
        &\qquad + 3L_*^\alpha \theta_k^{1-\alpha}\eta_k^\alpha p_k + 2\sigma^\alpha\theta_k^\alpha p_k.
	\end{align*}
	Summing up this inequality over $k=0,\ldots,K-1$, and using \eqref{upbd-exq-P0-pm} and \eqref{lwbd-exq-Pk-pm}, we obtain that for all $K\ge1$,
	\begin{align}
&f_{\mathrm{low}}\overset{\eqref{lwbd-exq-Pk-pm}}{\le} \mathbb{E}[{\mathcal P}_K]\nonumber \le \E[{\mathcal P}_0] - \sum_{k=0}^{K-1}\eta_k\mathbb{E}[\|\nabla f(X^k)\|_*] \\
&\qquad+ \sum_{k=0}^{K-1}\Big(2\eta_k\delta_k + \frac{L_*\eta_k^2}{2} + \frac{(\alpha - 1)(2\varrho^{1/2}\eta_k)^{\alpha/(\alpha- 1)}}{\alpha^{\alpha/(\alpha-1)}(\theta_k p_k)^{1/(\alpha- 1)}}+ 3L_*^\alpha \theta_k^{1-\alpha}\eta_k^\alpha p_k + 2\sigma^\alpha\theta_k^\alpha p_k\Big) \\
		&\qquad \overset{\eqref{upbd-exq-P0-pm}}{\le} f(X^0) + \sigma^\alpha - \eta_{K-1}\sum_{k=0}^{K-1}\mathbb{E}[\|\nabla f(X^k)\|_*]\nonumber\\
		&\qquad + \sum_{k=0}^{K-1}\Big(2\eta_k\delta_k + \frac{L_*\eta_k^2}{2} + \frac{(\alpha - 1)(2\varrho^{1/2}\eta_k)^{\alpha/(\alpha- 1)}}{\alpha^{\alpha/(\alpha-1)}(\theta_k p_k)^{1/(\alpha- 1)}} + 3L_*^\alpha \theta_k^{1-\alpha}\eta_k^\alpha p_k + 2\sigma^\alpha\theta_k^\alpha p_k\Big),\label{rear-pot-desc-pm}
	\end{align}
	where the last inequality follows from \eqref{upbd-exq-P0-pm} and the fact that $\{\eta_k\}$ is nonincreasing. Rearranging the terms in 
	\eqref{rear-pot-desc-pm}, and using the definition of $\{(\eta_k,\theta_k,\delta_k)\}$, \eqref{def:Umn}, and \eqref{def:pk-pm}, we obtain that for all $K\ge3$, 
	\begin{align*}
		&\frac{1}{K}\sum_{k=0}^{K-1}\mathbb{E}[\|\nabla f(X^k)\|_*]\overset{\eqref{rear-pot-desc-pm}}{\le}\frac{f(X^0) - f_{\mathrm{low}} + \sigma^\alpha}{K\eta_{K-1}}\\
		& + \frac{\sum_{k=0}^{K-1}\Big(2\eta_k\delta_k + \frac{L_*\eta_k^2}{2}+\frac{(\alpha-1)(2\varrho^{1/2}/\alpha)^{\alpha/(\alpha-1)}\eta_k^{\alpha/(\alpha - 1)}}{(\theta_k p_k)^{1/(\alpha - 1)}}\Big)}{K\eta_{K-1}} + \frac{\sum_{k=0}^{K-1}\Big(3L_*^\alpha\theta_k^{1-\alpha}\eta_k^\alpha p_k + 2\sigma^\alpha\theta_k^\alpha p_k\Big)}{K\eta_{K-1}}\\
		&\overset{\eqref{def:pk-pm}}{=} \frac{f(X^0) - f_{\mathrm{low}} + \sigma^\alpha}{K^{(\alpha-1)/(3\alpha-2)}} +\frac{\sum_{k=0}^{K-1}\frac{L_*}{2(k+1)^{2(2\alpha-1)/(3\alpha-2)}} }{K^{(\alpha-1)/(3\alpha-2)}} +\frac{\sum_{k=0}^{K-1}\frac{2+(\alpha-1)(2\varrho^{1/2}/\alpha)^{\alpha/(\alpha-1)} + 3L_*^\alpha+2\sigma^\alpha}{k+1}}{K^{(\alpha-1)/(3\alpha-2)}} \\
        & \overset{\eqref{def:Umn}}{\le} \frac{U_{\mathrm{mn}}\ln K}{K^{(\alpha-1)/(3\alpha-2)}},
	\end{align*}
	% where the third inequality follows from $\sum_{k=0}^{K-1}1/(k+1)\le \ln(2K+1)$ and $\sum_{k=0}^{K-1}1/(k+1)^{2(2\alpha-1)/(3\alpha-2)}\le(3\alpha-2) 2^{\alpha/(3\alpha-2)}/\alpha < 4$
	% due to \eqref{upbd:series-ka} with $(a,b)=(1,K)$, $\beta=1$ or $2(2\alpha-1)/(3\alpha-2)$, and  $(3\alpha-2)/\alpha \in (1,2]$, and the last inequality is due to $\ln(2K+1)\le 2\ln K$ for all $K\ge 3$. 
	where the second inequality follows from $\sum_{k=0}^{K-1}1/(k+1)\le 2\ln K$ due to \eqref{upbd:series-ka} and $K\ge 3$, and $\sum_{k=0}^{K-1}1/(k+1)^{2(2\alpha-1)/(3\alpha-2)}\le(3\alpha-2) 2^{\alpha/(3\alpha-2)}/\alpha < 4$ due to \eqref{upbd:series-ka} and $(3\alpha-2)/\alpha \in (1,2]$. Recall that $\iota_K$ is uniformly selected from $\{0,\ldots,K-1\}$. It then follows from this and the above inequality that for all $K\ge3$,
	\begin{align}\label{pre-upbd-pm}
		\E[\|\nabla f(X^{\iota_K})\|_*] \le \frac{U_{\mathrm{mn}}\ln K}{K^{(\alpha-1)/(3\alpha-2)}}.
	\end{align}
	In addition, by Lemma \ref{lem:rate-complexity} with $(\beta,u,v)=((\alpha-1)/(3\alpha-2),(\alpha-1)\epsilon/(2(3\alpha-2)U_{\mathrm{mn}}),K)$, one can see that 
	\begin{align*}
		&K^{-(\alpha-1)/(3\alpha-2)}\ln K\le \frac{\epsilon}{U_{\mathrm{mn}}}\qquad\forall K\ge\Big(\frac{2(3\alpha-2)U_{\mathrm{mn}}}{(\alpha-1)\epsilon}\ln\Big(\frac{2(3\alpha-2)U_{\mathrm{mn}}}{(\alpha-1)\epsilon}\Big)\Big)^{(3\alpha-2)/(\alpha-1)},    
	\end{align*}
	which together with \eqref{pre-upbd-pm} implies that Theorem \ref{thm:s-muon-k} holds.
\end{proof}

\end{document}